\tikzstyle{bnarrow}=[
\tikzstyle{bayesnet}=[
\tikzstyle{bnnode}=[
\def\thm@space@setup{%
  \thm@preskip=\topsep
  \thm@postskip=0.1\thm@preskip 
}
\newtheorem{theorem}{Theorem}
\newtheorem{lemma}[theorem]{Lemma}
\newtheorem{proposition}[theorem]{Proposition}
\theoremstyle{definition}
\newtheorem{definition}{Definition}
\newtheorem{axiom}{Axiom}
\newcommand{\ra}[1]{\renewcommand{\arraystretch}{#1}}
\newcommand{\rvar}[1]{\ensuremath{\mathit{#1}}\xspace}
\newcommand{\Xr}{\rvar{X}}
\newcommand{\rvars}[1]{\ensuremath{\mathbf{#1}}\xspace}
\newcommand{\Xs}{\rvars{X}}
\newcommand{\Ys}{\rvars{Y}}
\newcommand{\jstate}[1]{\ensuremath{\mathbf{#1}}\xspace}
\newcommand{\xs}{\jstate{x}}
\newcommand{\ys}{\jstate{y}}
\newcommand{\true}{\mathit{true}}
\newcommand{\vect}[1]{\ensuremath{\mathbf{\mathsf{#1}}}\xspace}
\newcommand{\pv}{\vect{p}}
\newcommand{\qv}{\vect{q}}
\newcommand{\sloss}{\operatorname{L^s}}
\icmltitlerunning{A Semantic Loss Function for Deep Learning with Symbolic Knowledge}
\author{Jingyi Xu, Zilu Zhang, Tal Friedman, Yitao Liang \& Guy Van den Broeck\\
Computer Science Department\\
University of California, Los Angeles\\ 
Los Angeles, CA, USA\\
\texttt{\small jixu@cs.ucla.edu,zhangzilu@pku.edu.cn,\{tal,yliang,guyvdb\}@cs.ucla.edu} 
}
\newcommand{\toycircuit}{
\centering
\begin{tikzpicture}[circuit logic US]
\node (x1) at (0,0) {$x_1$};
\node (nx2) at (1,0) {$\neg x_2$};
\node (nx3) at (2,0) {$\neg x_3$};

\node (nx1) at (3,0) {$\neg x_1$};
\node (x2) at (4,0) {$x_2$};

\node (x3) at (5,0) {$x_3$};

\node (and1) [and gate, inputs=nnn, rotate=90, scale =0.65] at ($(nx2) + (0,1.25)$) {};
\draw (and1.input 1) -- (x1);
\draw (and1.input 2) -- (nx2);
\draw (and1.input 3) -- (nx3);

\node (and2) [and gate, inputs=nnn, rotate=90, scale = 0.65] at ($(and1) + (1.5,0)$) {};
\draw (and2.input 1) -- (nx3);
\draw (and2.input 2) -- (nx1);
\draw (and2.input 3) -- (x2);

\node (and3) [and gate, inputs=nnn, rotate=90, scale=0.65] at ($(and2) + (1.5,0)$) {};
\node (dummy1) at ($(nx2) + (-0.03,0.13)$) {};
\node (dummy2) at ($(nx1) + (-0.1,0.13)$) {};
\draw (and3.input 1) -- (dummy1);
\draw(and3.input 2) -- (dummy2);
\draw (and3.input 3) -- (x3);

\node (or) [or gate, inputs=nnn, rotate=90, scale=0.65] at ($(and2) + (0,1.25)$) {};
\draw (or.input 1) -- (and1.output);
\draw (or.input 2) -- (and2.output);
\draw (or.input 3) -- (and3.output);

\end{tikzpicture}
}
\newcommand{\circuitComputation}{
\centering
\begin{tikzpicture}[circuit logic US]
\node (x1) at (0,0) {$\Pr(x_1)$};
\node (nx2) at (1.6,0) {$\Pr(\neg x_2)$};
\node (nx3) at (3.2,0) {$\Pr(\neg x_3)$};

\node (nx1) at (4.8,0) {$\Pr(\neg x_1)$};
\node (x2) at (6.4,0) {$\Pr(x_2)$};

\node (x3) at (8,0) {$\Pr(x_3)$};

\node (and1) at ($(nx2) + (0,1.5)$) {$\times$};
\draw (and1) -- (x1);
\draw (and1) -- (nx2);
\draw (and1) -- (nx3);

\node (and2) at ($(and1) + (1.6,0)$) {$\times$};
\draw (and2) -- (nx3);
\draw (and2) -- (nx1);
\draw (and2) -- (x2);

\node (and3) at ($(and2) + (1.6,0)$) {$\times$};
\node (dummy1) at ($(nx2) + (0.1,0.25)$) {};
\node (dummy2) at ($(nx1) + (0.0,0.2)$) {};
\draw (and3) -- (dummy1);
\draw(and3) -- (dummy2);
\draw (and3) -- (x3);

\node (or) at ($(and2) + (0,1.5)$) {$+$};
\draw (or) -- (and1);
\draw (or) -- (and2);
\draw (or) -- (and3);

\end{tikzpicture}

}
\begin{document}
\twocolumn[
\icmltitle{A Semantic Loss Function for Deep Learning with Symbolic Knowledge}

\icmlsetsymbol{equal}{*}

\begin{icmlauthorlist}
\author{Jingyi Xu, Zilu Zhang, Tal Friedman, Yitao Liang \& Guy Van den Broeck\\
Computer Science Department\\
University of California, Los Angeles\\ 
Los Angeles, CA, USA\\
\texttt{\small jixu@cs.ucla.edu,zhangzilu@pku.edu.cn,\{tal,yliang,guyvdb\}@cs.ucla.edu} 
}
\icmlauthor{Jingyi Xu}{to}
\icmlauthor{Zilu Zhang}{goo}
\icmlauthor{Tal Friedman}{to}
\icmlauthor{Yitao Liang}{to}
\icmlauthor{Guy Van den Broeck}{to}

\end{icmlauthorlist}

\icmlaffiliation{to}{Department of Computer Science, University of California Los Angeles, Los Angeles, CA, USA}
\icmlaffiliation{goo}{Peking University, Beijing, China}

\icmlcorrespondingauthor{Tal Friedman}{tal@cs.ucla.edu}
\icmlcorrespondingauthor{Yitao Liang}{yliang@cs.ucla.edu}
\icmlcorrespondingauthor{Guy Van den Broeck}{guyvdb@cs.ucla.edu}

\icmlsetsymbol{equal}{}

\icmlkeywords{Loss Function, Logic, Semi-Supervised Learning}

\vskip 0.3in
]
\printAffiliationsAndNotice{} 

\begin{abstract}
  This paper develops a novel methodology for using symbolic
knowledge in deep learning.  From first principles, we derive a semantic loss
function that bridges between neural output vectors and logical constraints.
This loss function captures how close the neural network is to satisfying
the constraints on its output. 
An experimental evaluation shows that it effectively guides the learner to achieve (near-)state-of-the-art results on semi-supervised multi-class classification. Moreover, it significantly increases the ability of the neural network to predict structured objects, such as rankings and paths. These discrete concepts are tremendously difficult to learn, and benefit from a tight integration of deep learning and symbolic reasoning methods.
\end{abstract}

\section{Introduction}

The widespread success of representation learning raises the question of which AI tasks are amenable to deep learning, which tasks require classical model-based symbolic reasoning, and whether we can benefit from a tighter integration of both approaches.
In recent years, significant effort has gone towards various ways of using representation learning to solve tasks that were previously tackled by symbolic methods. Such efforts include neural computers or differentiable programming \citep{weston2014memory,reed2015neural,graves2016hybrid,riedel2016programming}, relational embeddings or deep learning for graph data \citep{Yang2014JointRE,Lin2015LearningEA,bordes2013translating,neelakantan2015compositional,duvenaud2015convolutional,niepert2016learning}, neural theorem proving, and learning with constraints~\citep{hu2016acl,stewart2017label,minervini2017adversarial,wang2017premise}.

This paper considers learning in domains where we have symbolic knowledge connecting the different outputs of a neural network. This knowledge takes the form of a constraint (or sentence) in Boolean logic. It can be as simple as an exactly-one constraint for one-hot output encodings, or as complex as a structured output prediction constraint for intricate combinatorial objects such as rankings, subgraphs, or paths. Our goal is to augment neural networks with the ability to learn how to make predictions subject to these constraints, and use the symbolic knowledge to improve the learning performance.

Most neuro-symbolic approaches aim to simulate or learn symbolic reasoning in an end-to-end deep neural network, or capture symbolic knowledge in a vector-space embedding. 
This choice is partly motivated by the need for smooth \emph{differentiable} models; adding symbolic reasoning code (e.g., SAT solvers) to a deep learning pipeline destroys this property. 
Unfortunately, while making reasoning differentiable, the precise logical meaning of the knowledge is often lost.
In this paper, we take a distinctly unique approach, and tackle the problem
of differentiable but sound logical reasoning from first principles. Starting
from a set of intuitive axioms, we derive the differentiable \emph{semantic
  loss} which captures how well the outputs of a neural network match a given constraint. This function precisely captures the \emph{meaning} of the constraint, and is independent of its \emph{syntax}.

Next, we show how semantic loss gives significant \emph{practical improvements} in semi-supervised classification.  
In this setting, semantic loss for the exactly-one constraint permits
us to obtain a learning signal from vast amounts of unlabeled data. The key idea
is that semantic loss helps us improve how confidently we are able to classify the unlabeled data. This simple addition to the loss function of standard deep learning architectures yields (near-)state-of-the-art performance in semi-supervised classification on MNIST, FASHION, and CIFAR-10 datasets. 

\begin{figure*}
\centering
  \includegraphics[height=0.19\linewidth]{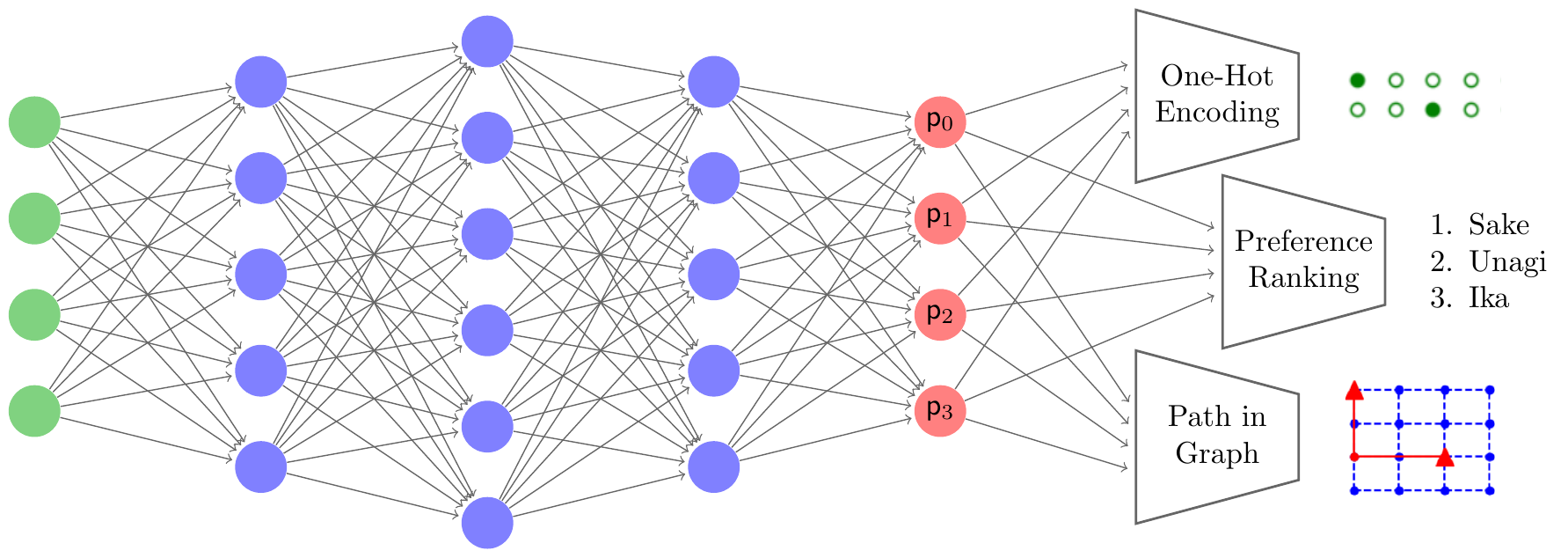}
   \caption{Outputs of a neural network feed into semantic loss functions for constraints representing
     a one-hot encoding, a total ranking of preferences, and paths in a grid graph.}
   \label{figure:neuralnetvis}
\end{figure*}

Our final set of experiments study the benefits of semantic loss for learning tasks with highly structured output, such as preference learning and path prediction in a graph~\citep{daume2009search,chang2013constrained,choi2015tractable,graves2016hybrid}. In these scenarios, the task is two-fold: learn both the structure of the output space, and the actual classification function within that space.
By capturing the structure of the output space with logical constraints, and
minimizing semantic loss for this constraint during learning, we are able to learn networks that are much more likely to correctly predict structured objects.

\section{Background and Notation}

To formally define semantic loss, we make use of concepts in propositional logic. 
We write uppercase letters ($X$,$Y$) for Boolean variables and lowercase letters ($x$,$y$) for their instantiation ($X=0$ or $X=1$). 
Sets of variables are written in bold uppercase ($\Xs$,$\Ys$), and their joint instantiation in bold lowercase~($\xs$,$\ys$). 
A literal is a variable ($X$) or its negation ($\neg X$).
A logical sentence ($\alpha$ or $\beta$) is constructed in the usual way, from variables and logical connectives ($\land$, $\lor$, etc.), and is also called a formula or constraint.
A state or world $\xs$ is an instantiation to all variables $\Xs$. A state $\xs$ satisfies a sentence $\alpha$, denoted $\xs \models \alpha$, if the sentence evaluates to be true in that world, as defined in the usual way.
A sentence $\alpha$ entails another sentence $\beta$, denoted
$\alpha \models \beta$ if all worlds that satisfy $\alpha$ also satisfy $\beta$. A sentence $\alpha$ is logically equivalent
to sentence $\beta$, denoted $\alpha \equiv \beta$, if both $\alpha \models \beta$ and $\beta \models \alpha$.

The output row vector of a neural net is denoted $\pv$. Each value in $\pv$
represents the probability of an output and falls in $[0,1]$. We use both softmax and sigmoid units for our output activation functions.
The notation for states $\xs$ is used to refer the an assignment, the logical sentence enforcing that assignment, or the binary output vector capturing that same assignment, as these are all equivalent notions.

Figure~\ref{figure:neuralnetvis} illustrates the three different concrete output constraints of varying
difficulty that are studied in our experiments. 
First, we examine the exactly-one or \emph{one-hot constraint} capturing the encoding used in multi-class classification. It states
that for a set of indicators $\Xs = \{X_1,\dots,X_n\}$, one and exactly one of those indicators must be
true, with the rest being false. 
This is enforced through a logical constraint $\alpha$ by conjoining sentences of the form $\neg X_1 \lor \neg X_2$ for all pairs of variables (at most one variable is true), and a single sentence $X_1 \lor \dots \lor X_n$ (at least one variable is true).
Our experiments further examine the \emph{valid simple path
constraint}. It states for a given source-destination pair and edge
indicators that the edge indicators set to true must form a valid simple
path from source to destination. Finally, we explore the \emph{ordering constraint},
which requires that a set of $n^2$ indicator variables represent a total
ordering over $n$ variables, effectively encoding a permutation matrix.
For a full description of the path and ordering
constraints, we refer to Section~\ref{section:complex}.

\section{Semantic Loss}
In this section, we formally introduce semantic loss. 
We begin by giving the definition and our intuition behind it.
This definition itself provides all of the necessary mechanics for enforcing
constraints, and is sufficient for the understanding of our
experiments in Sections 4 and 5. 
We also show that semantic loss is not just an
arbitrary definition, but rather is defined uniquely by a set of intuitive
assumptions.
After stating the assumptions formally, we then provide an axiomatic proof
of the uniqueness of semantic loss in satisfying these assumptions.

\subsection{Definition}

The semantic loss $\sloss(\alpha,\pv)$ is a function of a sentence $\alpha$ in
propositional logic, defined over variables $\Xs = \{X_1,\dots,X_n\}$, and a
vector of probabilities $\pv$ for the same variables $\Xs$. Element $\pv_i$
denotes the predicted probability of variable $X_i$, and corresponds to a single
output of the neural net. For example, the semantic loss between the one-hot
constraint from the previous section, and a neural net output vector $\pv$, is
intended to capture how close the prediction $\pv$ is to having exactly one
output set to true (i.e. 1), and all others set to false (i.e. 0),
regardless of which output is correct. The formal definition of this is as follows:
\begin{definition}[Semantic Loss] \label{def:sloss}
Let $\pv$ be a vector of probabilities, one for each variable in $\Xs$, and let $\alpha$ be a sentence over $\Xs$. The semantic loss between $\alpha$ and $\pv$ is
  \[\sloss(\alpha,\pv) \propto - \log \,\, \sum_{\xs \models \alpha} \,\,\, \prod_{i: \xs \models \Xr_i} \pv_i \,\,\, \prod_{i: \xs \models \neg \Xr_i}  (1-\pv_i).\]
\end{definition}

Intuitively, the semantic loss is proportional to a negative logarithm of the
probability of generating a state that satisfies the constraint, when sampling
values according to $\pv$. Hence, it is the self-information (or ``surprise'')
of obtaining an assignment that satisfies the
constraint~\citep{jones1979elementary}.

\subsection{Derivation from First Principles}

In this section, we begin with a theorem stating the uniqueness of semantic
loss, as fixed by a series of axioms. The full set of axioms and the derivation of the precise semantic loss function is
described in Appendix~\ref{a:axiomatization}\footnote{Appendices are included in the supplementary material.}.

\begin{theorem}[Uniqueness] \label{thm:unique}
  The semantic loss function in Definition~\ref{def:sloss} satisfies
  all axioms in Appendix~\ref{a:axiomatization} and is the only function that does so, up to a multiplicative constant.
\end{theorem}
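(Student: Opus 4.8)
The plan is to establish uniqueness by a standard "axioms pin down the formula" argument, in two directions: first show the semantic loss of Definition~\ref{def:sloss} satisfies every axiom (routine verification), then show any function satisfying the axioms must coincide with it up to a multiplicative constant. Since I do not yet see the axiom list (it lives in Appendix~\ref{a:axiomatization}), I will reason from what the definition and the surrounding intuition strongly suggest the axioms are: (i) a \emph{monotonicity/consistency} condition tying $\sloss$ to logical strength (e.g.\ if $\alpha \models \beta$ then satisfying $\alpha$ is "at least as surprising" as satisfying $\beta$), (ii) a \emph{semantic invariance} axiom ($\alpha \equiv \beta$ implies $\sloss(\alpha,\pv) = \sloss(\beta,\pv)$, so the loss depends only on the set of models, not the syntax), (iii) \emph{boundary conditions} fixing the value on literals — e.g.\ $\sloss(X_i,\pv)$ should be $-\log \pv_i$ (up to the constant) and $\sloss(\neg X_i,\pv) = -\log(1-\pv_i)$ — and, crucially, (iv) an \emph{additive/multiplicative decomposition} axiom stating that for constraints on disjoint variable sets the losses add, or equivalently the underlying "satisfaction probabilities" multiply. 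The argument then proceeds by: reducing an arbitrary $\alpha$ via (ii) to its canonical DNF over all models; using (iv) to express the loss of a single full model (a conjunction of $n$ literals over disjoint variables) as the sum of literal losses, hence $-\sum_{i:\xs\models X_i}\log\pv_i - \sum_{i:\xs\models\neg X_i}\log(1-\pv_i) = -\log\prod_{i:\xs\models X_i}\pv_i\prod_{i:\xs\models\neg X_i}(1-\pv_i)$ by (iii); and finally using an axiom governing disjunctions of mutually exclusive models (a "law of total probability"-style axiom) to combine distinct models by \emph{summing their satisfaction probabilities} inside the logarithm, yielding exactly $-\log\sum_{\xs\models\alpha}\prod\cdots\prod\cdots$. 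The multiplicative constant is the one left free by rescaling in (iii)/(iv).

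Carried out in order, the key steps are: (1) verify each axiom holds for Definition~\ref{def:sloss} — mostly a matter of checking that $\sum_{\xs\models\alpha}\Pr(\xs)$ is a genuine probability that respects logical implication, equivalence, and factorizes over independent variable blocks, which is immediate from the product form; (2) for the converse, peel off the logarithm — i.e.\ define $f(\alpha,\pv) = \exp(-c\cdot\sloss(\alpha,\pv))$ for an appropriate constant and show $f$ satisfies a clean multiplicative/additive system (product over disjoint blocks, sum over exclusive disjuncts, correct value on literals); (3) observe that these constraints uniquely determine $f$ on every sentence, because every $\alpha$ decomposes (up to $\equiv$) as a disjoint union of its full models and each full model decomposes as an independent conjunction of literals; (4) conclude $\sloss$ is determined up to the free scalar.

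The main obstacle is \textbf{not} the algebra but pinning down that the axioms are exactly strong enough — no more, no less — to force both the \emph{logarithm} (as opposed to some other decreasing function of the satisfaction probability) and the \emph{summation over models} (as opposed to, say, a max or a different aggregation). The logarithm should fall out of the axiom asserting additivity of loss across independent constraints combined with the product structure of independent sampling — functional-equation-style, this is the only continuous solution up to scaling, analogous to the derivation of Shannon information. The summation over exclusive models should fall out of whichever axiom plays the role of "probability of a disjoint union is the sum"; I expect the subtlety is ensuring this axiom is stated for \emph{all} pairs of mutually exclusive sentences rather than only syntactically disjoint ones, so that it applies after the reduction to DNF. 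Once those two axioms are correctly in hand, the rest is bookkeeping: reduce to DNF via semantic invariance, apply the product axiom within each model, apply the sum axiom across models, and read off the unique answer.
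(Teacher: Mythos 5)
Your proposal is correct and follows essentially the same route as the paper: reduce to the set of models via semantic equivalence, pin down the loss of a single state as a sum of literal losses via additive independence and the literal boundary condition, and then sum the exponentiated losses over mutually exclusive models via an additivity axiom (the paper packages this last step as verifying that $K^{-\sloss(\cdot,\pv)}$ satisfies Kolmogorov's axioms and is therefore the induced product distribution). The one obstacle you flag --- forcing the logarithm by a Shannon-style functional equation --- does not actually arise, because the paper's axioms postulate the logarithmic form directly (Label-Literal Correspondence and Exponential Additivity), with Value and Variable Symmetry serving only to equalize the per-literal constants.
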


In the remainder of this section, we provide a selection of the most intuitive axioms from Appendix~\ref{a:axiomatization}, as well as some key properties.

First, to retain logical meaning, we postulate that semantic loss is monotone in the order of implication.
\begin{axiom}[Monotonicity] \label{prop:monotonicity}
  If $\alpha \models \beta$, then the semantic loss $\sloss(\alpha,\pv) \geq \sloss(\beta,\pv)$ for any vector $\pv$.
\end{axiom}
Intuitively, as we add stricter requirements to the logical constraint, going from $\beta$ to $\alpha$ and making it harder to satisfy, the semantic loss cannot decrease. For example, when $\beta$ enforces the output of an neural network to encode a subtree of a graph, and we tighten that requirement in $\alpha$ to be a path, the semantic loss cannot decrease. Every path is also a tree and any solution to $\alpha$ is a solution to $\beta$.

A direct consequence following the monotonicity axiom is that logically equivalent sentences must incur an identical semantic loss for the same probability vector \pv. Hence, the semantic loss is indeed a semantic property of the logical sentence, and \emph{does not depend on its syntax}.
\begin{proposition} [Semantic Equivalence] \label{prop:equivalence}
  If $\alpha \equiv \beta$, then the semantic loss  $\sloss(\alpha,\pv) = \sloss(\beta,\pv)$ for any vector $\pv$.
\end{proposition}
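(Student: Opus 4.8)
The plan is to derive this directly from the Monotonicity axiom (Axiom~\ref{prop:monotonicity}), which the semantic loss satisfies by Theorem~\ref{thm:unique}. First I would unpack the hypothesis: by definition of logical equivalence, $\alpha \equiv \beta$ means both $\alpha \models \beta$ and $\beta \models \alpha$. Then I would invoke monotonicity twice — once for each entailment. Applying it to $\alpha \models \beta$ gives $\sloss(\alpha,\pv) \geq \sloss(\beta,\pv)$, and applying it to $\beta \models \alpha$ gives $\sloss(\beta,\pv) \geq \sloss(\alpha,\pv)$. Combining the two inequalities forces $\sloss(\alpha,\pv) = \sloss(\beta,\pv)$, which is exactly the claim. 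No case analysis on $\pv$ or on the structure of the sentences is needed, since monotonicity is already quantified over all probability vectors $\pv$.

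As a sanity check — and an alternative self-contained route that does not route through the axioms — one can also read the equality off Definition~\ref{def:sloss} directly: if $\alpha \equiv \beta$, then the two sentences have the same set of satisfying worlds, $\{\xs : \xs \models \alpha\} = \{\xs : \xs \models \beta\}$, so the sums $\sum_{\xs \models \alpha}(\cdot)$ and $\sum_{\xs \models \beta}(\cdot)$ are term-for-term identical, and hence so are their negative logarithms. I would still present the monotonicity argument as the primary one, since the point of this subsection is that semantic loss is pinned down by the axioms, so it is cleaner to show that semantic equivalence already follows from a single axiom rather than from the explicit formula.

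There is essentially no substantive obstacle here; the proof is three lines. The only subtlety worth a sentence is that $\sloss$ is defined only up to a multiplicative constant, so "equal" should be read as: under any fixed choice of that constant, the two values coincide. Monotonicity as stated already respects this normalization convention, so the two-inequality argument goes through verbatim and no extra bookkeeping about the constant is required.
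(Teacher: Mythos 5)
Your argument is correct and is exactly the one the paper intends: since $\alpha \equiv \beta$ means $\alpha \models \beta$ and $\beta \models \alpha$, two applications of the Monotonicity axiom yield the two inequalities whose conjunction forces $\sloss(\alpha,\pv) = \sloss(\beta,\pv)$. The additional sanity check via Definition~\ref{def:sloss} is a fine cross-validation but is not needed, and your remark about the multiplicative constant is a reasonable (if optional) clarification.
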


Another consequence is that semantic loss must be non-negative if we want the loss to be 0 for a true sentence.

Next, we state axioms establishing a correspondence between logical constraints and data.
A state $\xs$ can be equivalently represented as both a binary data vector, as well as a logical constraint that enforces a value for every variable in $\Xs$. 
When both the constraint and the predicted vector represent the same state (for example, $X_1 \land \neg X_2 \land X_3$ vs.~$[1\,0\,1]$), there should be no semantic~loss.
\begin{axiom}[Identity] \label{ex:identity}
  For any state $\xs$, there is zero semantic loss between its representation as a sentence, and its representation as a deterministic vector: $\forall \xs, \sloss(\xs,\xs) = 0$. 
\end{axiom}

The axiom above together with the monotonicity axiom imply that any vector satisfying the constraint must incur zero loss. For example, when our constraint $\alpha$ requires that the output vector encodes an arbitrary total ranking, and the vector $\xs$ correctly represents a single specific total ranking, there is no semantic loss.
\begin{proposition}[Satisfaction] \label{prop:satisfaction}
  If $\xs \models \alpha$, then the semantic loss  $\sloss(\alpha,\xs) = 0$.
\end{proposition}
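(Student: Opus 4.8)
The plan is to squeeze $\sloss(\alpha,\xs)$ between $0$ and $0$ using the Identity axiom from above and the Monotonicity axiom, together with the previously noted non-negativity of semantic loss. The starting observation is the triple reading of a state: $\xs$ may be read as a binary vector, as a world, or as the conjunction of literals fixing every variable in $\Xs$. Under the last reading, $\xs$ is the strongest satisfiable sentence over $\Xs$ having $\xs$ as its unique model, so the hypothesis $\xs \models \alpha$ (a statement about worlds) is equally a statement $\xs \models \alpha$ about sentences.

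With that in hand, the argument is short. First I would apply Monotonicity with the stricter sentence being $\xs$ and the looser sentence being $\alpha$, and with the probability vector taken to be the deterministic vector $\xs$: since $\xs \models \alpha$, this yields
\[
\sloss(\xs,\xs) \;\geq\; \sloss(\alpha,\xs).
\]
Next I would invoke the Identity axiom, $\sloss(\xs,\xs) = 0$, to conclude $\sloss(\alpha,\xs) \leq 0$. Finally, combining this with non-negativity of semantic loss (the consequence noted just after the Identity axiom, $\sloss(\alpha,\xs) \geq 0$) forces $\sloss(\alpha,\xs) = 0$, which is exactly the claim.

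There is essentially no hard step here; the only thing worth being careful about is the bookkeeping between the three equivalent meanings of $\xs$ — in particular, making sure that ``$\xs$ as a sentence'' really does entail $\alpha$ whenever ``$\xs$ as a world'' satisfies $\alpha$, so that Monotonicity applies — and making sure the non-negativity fact is legitimately available (it is, having been derived from the axioms earlier in this section). Everything else is a two-line chain of inequalities.
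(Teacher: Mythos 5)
Your proposal is correct and matches the paper's own proof essentially step for step: apply Monotonicity to $\xs \models \alpha$ with the probability vector chosen to be $\xs$, use the Identity axiom to get $\sloss(\alpha,\xs) \leq 0$, and close with the Non-Negativity consequence of the Truth and Monotonicity axioms. No differences worth noting.
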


As a special case, logical literals ($X$ or $\neg X$) constrain a single variable to take on a value, and thus play a role similar to the labels used in supervised learning. Such constraints require an even tighter correspondence: the semantic loss must act like a classical loss function (i.e., cross entropy).
\begin{axiom}[Label-Literal Correspondence] \label{ax:labelcorrespondence}
  The semantic loss of a single literal is proportionate to the cross-entropy loss for the equivalent data label: $\sloss(X,p)  \propto -\log(p)$ and $\sloss(\neg X,p)  \propto -\log(1-p)$.
\end{axiom}

Appendix~\ref{a:axiomatization} states additional axioms that allow us to prove the following form of the semantic loss for a state $\xs$.
\begin{lemma}\label{l:slossstate}
For state $\xs$ and vector $\pv$, we have $\sloss(\xs,\pv) \propto -\sum_{i: \xs \models \Xr_i} \log \pv_i - \sum_{i: \xs \models \neg \Xr_i} \log (1-\pv_i).$
\end{lemma}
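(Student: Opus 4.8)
The plan is to reduce the state $\xs$ to a conjunction of single-variable constraints with pairwise-disjoint scopes, and then evaluate each piece with the Label--Literal Correspondence axiom. First I would rewrite $\xs$ in its canonical conjunctive form $\xs \equiv \bigwedge_{i=1}^{n} \ell_i$, where $\ell_i = X_i$ when $\xs \models X_i$ and $\ell_i = \neg X_i$ when $\xs \models \neg X_i$. By Proposition~\ref{prop:equivalence} (Semantic Equivalence), replacing $\xs$ by this logically equivalent sentence does not change the semantic loss, so it suffices to compute $\sloss\bigl(\bigwedge_i \ell_i, \pv\bigr)$.

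Next I would invoke the additive-decomposition axiom from Appendix~\ref{a:axiomatization}: when a constraint factors as $\alpha \land \beta$ with $\alpha$ and $\beta$ defined over disjoint sets of variables, its semantic loss is the sum $\sloss(\alpha,\pv) + \sloss(\beta,\pv)$, each summand depending only on the probabilities of the variables in its own scope. Since the literals $\ell_1,\dots,\ell_n$ mention the pairwise-distinct variables $X_1,\dots,X_n$, a straightforward induction on $n$ using this axiom gives $\sloss(\xs,\pv) = \sum_{i=1}^{n} \sloss(\ell_i,\pv_i)$.

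Finally I would apply Axiom~\ref{ax:labelcorrespondence} (Label--Literal Correspondence) to each term: $\sloss(\ell_i,\pv_i) \propto -\log \pv_i$ when $\ell_i = X_i$, i.e.\ when $\xs \models X_i$, and $\sloss(\ell_i,\pv_i) \propto -\log(1-\pv_i)$ when $\ell_i = \neg X_i$, i.e.\ when $\xs \models \neg X_i$. Substituting and collecting the two kinds of terms yields exactly $\sloss(\xs,\pv) \propto -\sum_{i:\,\xs \models X_i} \log \pv_i - \sum_{i:\,\xs \models \neg X_i} \log(1-\pv_i)$.

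The main obstacle — and the reason the full axiom list is needed rather than just the ones quoted in the body — is making the proportionality bookkeeping rigorous: Label--Literal Correspondence pins down each $\sloss(\ell_i,p)$ only up to a constant, and to collapse the sum into a single $\propto$ these constants must all coincide (and agree with the constant implicit in the additive decomposition). I expect the appendix to enforce this through a symmetry/renaming axiom — invariance under permuting variable names and under the negation flip $p \mapsto 1-p$ — which forces all the literal constants to be equal; the additive axiom then carries that one constant through the whole conjunction. Everything else is the routine induction and substitution sketched above.
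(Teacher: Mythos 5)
Your proposal is correct and follows essentially the same route as the paper: the appendix proof also writes the state as a conjunction of literals, applies the Additive Independence axiom to decompose the loss into a sum over literals, and evaluates each term via Label--Literal Correspondence, using exactly the symmetry axioms you anticipated (Value Symmetry for the $p \mapsto 1-p$ flip and Variable Symmetry for renaming) to show all proportionality constants coincide (Lemma~\ref{l:consteq}). No gaps.
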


Lemma~\ref{l:slossstate} falls short as a full definition of semantic loss for arbitrary sentences. One can define additional axioms to pin down $\sloss$. For example, the following axiom is satisfied by Definition~\ref{def:sloss}, and is highly desirable for learning.
\begin{axiom}[Differentiability]
  For any fixed $\alpha$, the semantic loss $\sloss(\alpha,\pv)$ is monotone in each probability in $\pv$, continuous and differentiable.
\end{axiom}

Appendix~\ref{a:axiomatization}  makes the notion of semantic loss precise by stating one additional axiom. It is based on the observation that the state loss of Lemma~\ref{l:slossstate} is proportionate to a log-probability. In particular, it corresponds to the probability of obtaining state $\xs$ after independently sampling each $X_i$ with probability $\pv_i$.
We have now derived the semantic loss function from first principles, and
arrived at Definition~\ref{def:sloss}. Moreover, we can show that
Theorem~\ref{thm:unique} holds - that it is the only choice of such a loss function.

\section{Semi-Supervised Classification} \label{s:semi}

The most straightforward constraint that is ubiquitous in classification is mutual exclusion over one-hot-encoded outputs. That is, for a given example, exactly one
class and therefore exactly one binary indicator must be true.
The machine learning community has made great strides on this task, due to the
invention of assorted deep learning representations and their associated
regularization terms \citep{alex2012nips, he2016cvpr}. 
Many of these models take large amounts of
labeled data for granted, and big data is indispensable for discovering accurate representations \citep{hastie2009overview}. 
To sustain this progress, and alleviate the need for more labeled data, there is a growing interest into utilizing unlabeled data to augment the predictive power of classifiers \citep{stewart2017label,
  bilenko2004integrating}. 
This section shows why semantic loss naturally qualifies for this task.

\begin{figure}[t]
\centering
  \begin{subfigure}[b]{0.23\textwidth} \centering
    \includegraphics[width=1.1\textwidth]{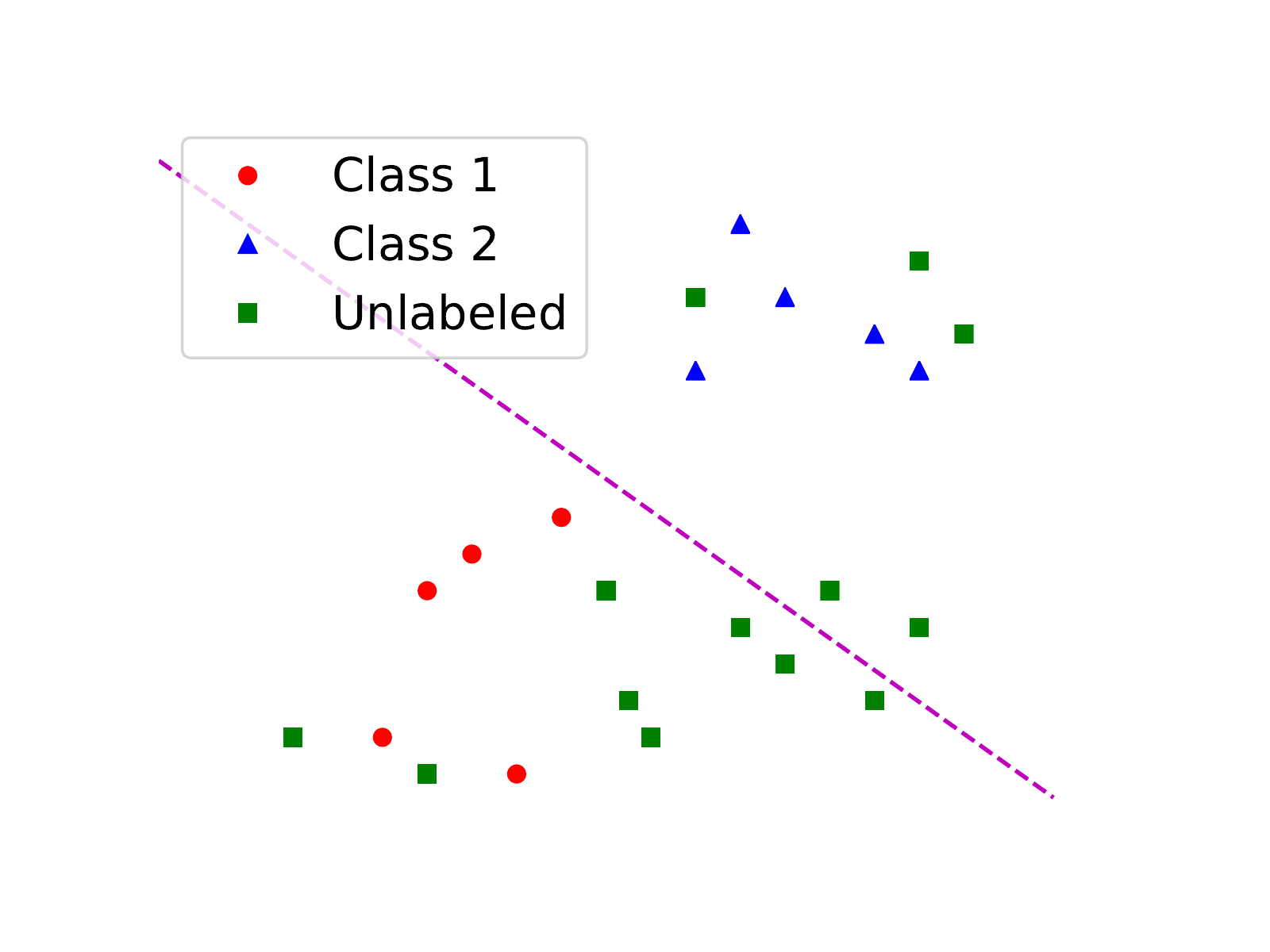}
    \caption{Trained w/o semantic loss} \label{figure: toy example:without}
  \end{subfigure}
  \begin{subfigure}[b]{0.23\textwidth} \centering
    \includegraphics[width=1.1\textwidth]{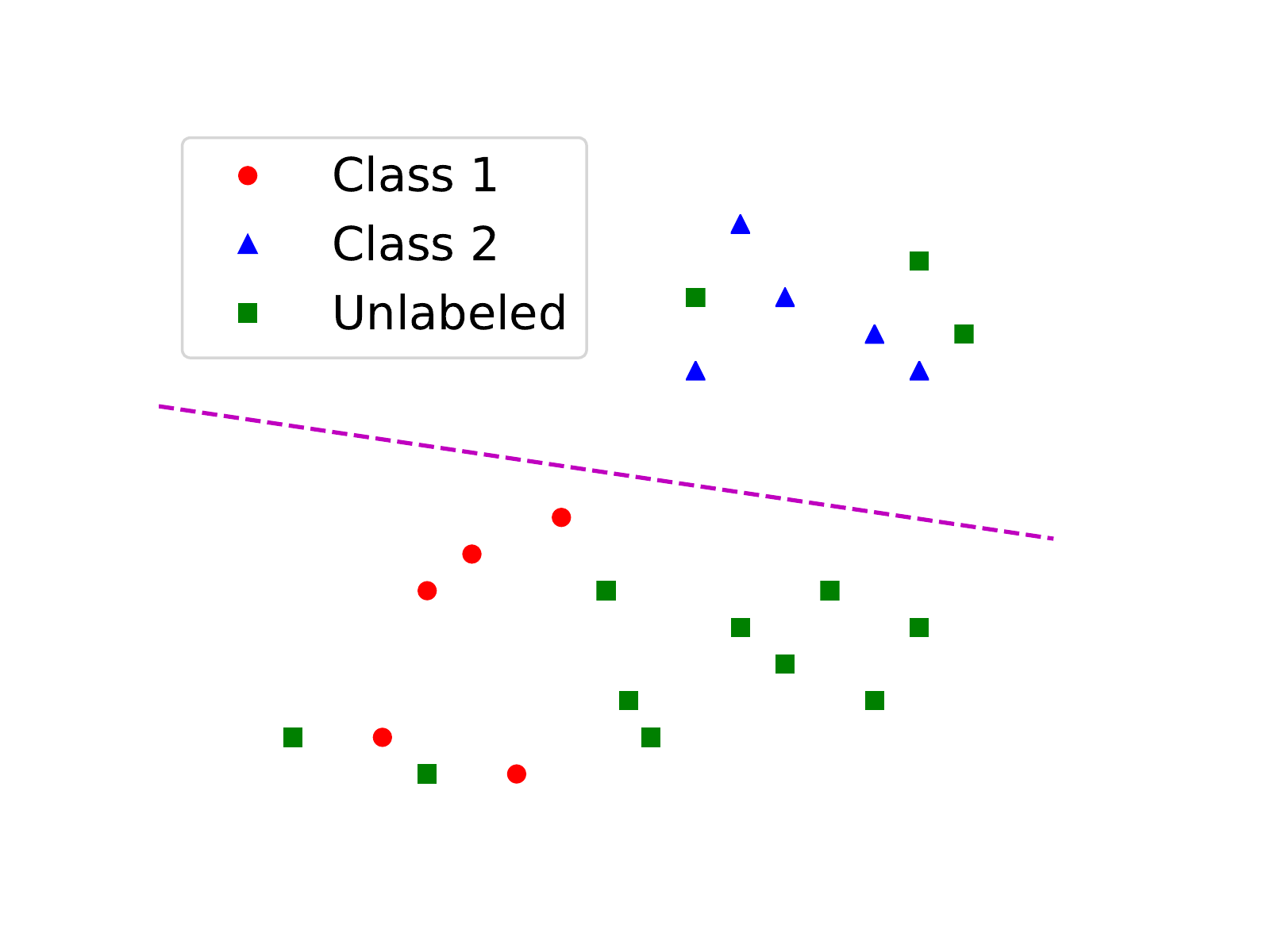}
    \caption{Trained with semantic loss} \label{figure: toy example:with}
  \end{subfigure}
   \caption{Binary classification toy example: a linear classifier without and with semantic loss.}
   \label{figure: toy example}
\end{figure}

\paragraph{Illustrative Example}
To illustrate the benefit of semantic loss in the semi-supervised setting, we
begin our discussion with a small toy example. Consider a binary classification
task; see Figure~\ref{figure: toy example}. Ignoring the unlabeled
examples, a simple linear classifier learns to distinguish the two classes by
separating the labeled examples (Figure~\ref{figure: toy example:without}).
However, the unlabeled examples are also informative, as they must carry some
properties that give them a particular label. This is the crux of semantic loss for semi-supervised learning:
a model must confidently assign a consistent class even to unlabeled data.
Encouraging the model to do so results in a more accurate decision boundary (Figure~\ref{figure: toy example:with}).

\subsection{Method} \label{s:algo}

Our proposed method intends to be generally applicable and compatible with any feedforward neural net. Semantic loss is simply another regularization term that can directly be plugged into an existing loss function. More specifically, with some weight $w$, the new overall loss becomes
\begin{align*}
\text{existing loss} + w\cdot\text{semantic loss}.
\end{align*}
When the constraint over the output space is simple (for example, there is a small number of solutions $\xs \models \alpha$), semantic loss can be directly computed 
using Definition~\ref{def:sloss}. Concretely, for the exactly-one constraint used in $n$-class classification, semantic loss reduces to
\begin{equation*}
\sloss({\text{exactly-one}},\pv) \propto - \log \, \sum_{i=1}^{n}{\pv_i}\prod_{j=1, j\neq i}^{n}{(1-\pv_j)},
\end{equation*}
where values $\pv_i$ denote the probability of class $i$ as predicted by the
neural net. Semantic loss for the exactly-one constraint is efficient and causes no noticeable computational overhead in our experiments.

In general, for arbitrary constraints $\alpha$, semantic loss is not efficient to compute using Definition~\ref{def:sloss}, and more advanced automated reasoning is required. Section~\ref{section:complex} discusses this issue in more detail. For example, using automated reasoning can reduce the time complexity to compute semantic loss for the exactly-one constraint from $O(n^2)$ (as shown above), to $O(n)$.

\subsection{Experimental Evaluation}

In this section, we evaluate semantic loss in the semi-supervised setting by
comparing it with several competitive models.\footnote{The code to reproduce all the experiments in this paper can be found at 
\url{https://github.com/UCLA-StarAI/Semantic-Loss/.}}  
As most semi-supervised learners build on a supervised learner, changing the underlying model significantly affects the semi-supervised learner's performance. For comparison, we add semantic loss to the same base models used in ladder nets \citep{rasmus2015semi}, which currently achieves state-of-the-art results on semi-supervised MNIST and CIFAR-10 \citep{krizhevsky2009cifar}.
Specifically, the MNIST base model is a fully-connected multilayer perceptron (MLP), with layers of size 784-1000-500-250-250-250-10. On CIFAR-10, it is a 10-layer convolutional neural network (CNN) with 3-by-3 padded filters. After every 3 layers, features are subject to a 2-by-2 max-pool layer with strides of 2. Furthermore, we use ReLu \citep{nair2010rectified}, batch normalization \citep{ioffe2015batch}, and Adam optimization \citep{kingma2015adam} with a learning rate of 0.002. We refer to Appendix~\ref{appendix:specification} and~\ref{appendix:tuning} for a specification of the CNN model and additional details about hyper-parameter tuning.

For all semi-supervised experiments, we use the standard 10,000 held-out test examples provided in the original datasets and randomly pick 10,000 from the standard 60,000 training examples (50,000 for CIFAR-10) as validation set. For values of $N$ that depend on the experiment, we retain $N$ randomly chosen labeled examples from the training set, and remove labels from the rest. We balance classes in the labeled samples to ensure no particular class is over-represented. 
Images are preprocessed for standardization and Gaussian noise 
is added to every pixel ($\sigma=0.3$). 

\paragraph{MNIST}
The permutation invariant MNIST classification task is commonly used as a test-bed for general semi-supervised learning algorithms. This setting does not use any prior information about the spatial arrangement of the input pixels. Therefore, it excludes many data augmentation techniques that involve geometric distortion of images, as well as convolutional neural networks.

\begin{table*}[t]
      \ra{1.05}
\vskip 0.05in
\caption {MNIST. Previously reported test accuracies followed by baselines and semantic loss results ($\pm$ stddev) }
\label{table:mnist}
\centering
{\small
\begin{tabular}{@{} l l l l @{} }
Accuracy \% with \# of used labels & 100 & 1000 & ALL \\
\midrule \midrule
AtlasRBF \citep{pitelis2014semi}   & 91.9~~($\pm$0.95) & 96.32 ($\pm$0.12) & 98.69\\
Deep Generative \citep{dgn2014nips} &96.67($\pm$0.14) &97.60 ($\pm$0.02)  & 99.04 \\
Virtual Adversarial \citep{miyato2016iclr} & 97.67 & 98.64 & 99.36 \\
Ladder Net \citep{rasmus2015semi} &  {\bf 98.94} ($\pm$0.37 ) & {\bf 99.16} ($\pm$0.08) & 99.43 ($\pm$0.02)\\
\midrule
Baseline: MLP, Gaussian Noise & 78.46 ($\pm$1.94) & 94.26 ($\pm$0.31) & 99.34 ($\pm 0.08$) \\
Baseline: Self-Training  & 72.55 ($\pm$4.21) & 87.43 ($\pm$3.07) & \\
Baseline: MLP with Entropy Regularizer & 96.27 ($\pm$0.64) & 98.32 ($\pm$0.34) & 99.37 ($\pm$0.12)  \\
\midrule
 MLP with Semantic Loss  & 98.38 ($\pm$0.51) & 98.78 ($\pm$0.17)  & 99.36 ($\pm$0.02)
\end{tabular}
}
\vskip -0.05in
\end{table*}

\begin{table*}[t]
      \ra{1.05}
\centering
\caption{FASHION. Test accuracy comparison between MLP with semantic loss and ladder nets.}
\label{table:fashion}
\vskip -0.05in
{\small
\begin{tabular}{ @{}l l l l l @{}}
Accuracy \% with \# of used labels & 100 & 500 &1000 & ALL \\
\midrule\midrule
Ladder Net  \citep{rasmus2015semi}& 81.46 ($\pm$0.64 ) &  85.18 ($\pm$0.27) &  86.48 ($\pm$0.15) &  90.46 
\\
\midrule
Baseline: MLP, Gaussian Noise & 69.45 ($\pm 2.03$) & 78.12 ($\pm 1.41$) & 80.94 ($\pm 0.84$) &89.87 \\
 MLP with Semantic Loss &  {\bf 86.74} ($\pm 0.71$) & {\bf 89.49} ($\pm 0.24$)  & {\bf 89.67} ($\pm 0.09$) & 89.81
\end{tabular}
}
\vskip -0.1in
\end{table*}

When evaluating on MNIST,  we run experiments for 20 epochs, with a batch size of 10. Experiments are repeated 10 times with different random seeds. 
Table~\ref{table:mnist} compares semantic loss to three baselines and state-of-the-art results from the literature. 
The first baseline is a purely supervised MLP, which makes no use of unlabeled
data. The second is the classic self-training method for semi-supervised
learning, which operates as follows. After every 1000 iterations, the unlabeled
examples that are predicted by the MLP to have more than $95\%$ probability of
belonging to a single class, are assigned a pseudo-label and become labeled
data. 

Additionally, we constructed a third baseline by replacing the semantic loss term with the entropy regularizor described in \citet{grandvalet2005semi} as a direct comparison for semantic loss. With the same amount of parameter tuning, we found that using entropy achieves an accuracy of $96.27\%$ with 100 labeled examples, and $98.32\%$ with 1000 labelled examples, both are slightly worse than the accuracies reached by semantic loss. Furthermore, to our best knowledge, there is no straightforward method to generalize entropy loss to the settings of complex constraints, where semantic loss is clearly defined and can be easily deployed. We will discuss this more in Section \ref{section: complex constraints}.

Lastly, We attempted to create a fourth baseline by constructing a constraint-sensitive loss term in the style of \citet{hu2016acl}, using a simple extension of
Probabilistic Soft Logic (PSL) \citep{Kimmig2012ASI}. PSL translates logic into continuous domains by using soft truth values, and defines functions in the
real domain corresponding to each Boolean function. This is normally done for Horn clauses, but since they are not sufficiently expressive for our constraints, we apply fuzzy operators to arbitrary sentences instead.
We are forced to deal with a key difference between semantic loss and PSL: encodings in fuzzy logic
are highly sensitive to the syntax used for the constraint (and therefore violate Proposition~\ref{prop:equivalence}). We selected two reasonable encodings detailed in Appendix~\ref{a:psl}. The
first encoding results in a constant value of 1, and thus could not be used for
semi-supervised learning. The second encoding empirically deviates from 1 by
$<0.01$, and since we add Gaussian noise to the pixels, no amount of tuning was able to extract meaningful supervision. Thus, we do not report these
results.

When given 100 labeled examples ($N=100$), MLP with semantic loss gains around $20\%$ improvement over the purely supervised baseline. The improvement is even larger ($25\%$) compared to self-training. Considering \emph{the only change is an additional loss term}, this result is very encouraging.
Comparing to the state of the art, ladder nets slightly outperform semantic loss by $0.5\%$ accuracy. 
This difference may be an artifact of the excessive tuning of architectures, hyper-parameters and learning rates that the MNIST dataset has been subject to. 
In the coming experiments, we extend our work to more challenging
datasets, in order to provide a clearer comparison with ladder~nets.  
Before that, we want to share a few more thoughts on how semantic loss works.
A classical softmax layer interprets its output as representing a categorical distribution. Hence, by normalizing its outputs, softmax enforces the same mutual exclusion constraint enforced in our semantic loss function. 
However, there does not exist a natural way to extend softmax loss to unlabeled samples. In contrast, semantic loss does provide a learning signal on unlabeled samples, by forcing the underlying classifier to make an decision and construct a confident hypothesis for all data.  
However, for the fully supervised case ($N=\text{all}$), semantic loss does not
significantly affect accuracy. Because the MLP has enough capacity to almost
perfectly fit the training data, where the constraint is always satisfied,
semantic loss is almost always zero. This is a direct consequence of
Proposition~\ref{prop:satisfaction}.

\paragraph{FASHION}

The FASHION \citep{xiao2017online} dataset consists of Zalando's article images, aiming to serve as a more challenging drop-in replacement for MNIST.  Arguably, it has not been overused and requires more advanced techniques to achieve good performance.  
As in the previous experiment, we run our method for 20 epochs, whereas ladder
nets need 100 epochs to converge. Again, experiments are repeated 10 times and
Table~\ref{table:fashion} reports the classification accuracy and its standard deviation (except for $N=\text{all}$ where it is close to $0$ and omitted for space).

Experiments show that utilizing semantic loss results in a very large
$17\%$ improvement over the baseline when only 100 labels are provided. Moreover, our method compares favorably to ladder nets, except when the setting degrades to be fully supervised. 
Note that our method already nearly reaches its maximum accuracy with 500 labeled examples, which is only $1\%$ of the training dataset.

\begin{table}[t]
\ra{1.05}
\caption {CIFAR. Test accuracy comparison between CNN with Semantic Loss and ladder nets.}
\label{table:cifar}
\vskip -0.05in
\centering
{\small
\begin{tabular}{@{} l l l @{}}
Accuracy \% with \# of used labels & 4000 & ALL\\
\midrule \midrule
CNN Baseline in Ladder Net & 76.67 ($\pm$0.61)& 90.73 \\
Ladder Net \citep{rasmus2015semi} & 79.60 ($\pm$0.47) &\\
\midrule
Baseline: CNN, Whitening, Cropping & 77.13 & 90.96\\
CNN with Semantic Loss  & {\bf 81.79} & 90.92
\end{tabular}
}
\vskip -0.1in
\end{table}

\paragraph{CIFAR-10} To show the general applicability of semantic loss, we evaluate it on CIFAR-10. This dataset consisting of 32-by-32 RGB images in 10 classes. A simple MLP would not have enough representation power to capture the huge variance across objects within the same class. To cope with this spike in difficulty, we switch our underlying model to a 10-layer CNN as described earlier.
We use a batch size of 100 samples of which half are unlabeled. Experiments are run for 100 epochs. However, due to our limited computational resources, we report on a single trial. 
Note that we make slight modifications to the underlying model used in ladder nets to reproduce similar baseline performance. Please refer to Appendix~\ref{appendix:specification} for further details.

As shown in Table~\ref{table:cifar}, our method compares favorably to ladder nets. However, due to the slight difference in performance  between the supervised base models, a direct comparison would be methodologically flawed. Instead, we compare the net improvements over baselines. In terms of this measure, our method scores a gain of $4.66\%$ whereas ladder nets gain $2.93\%$.

\subsection{Discussion}
The experiments so far have demonstrated 
the competitiveness and general applicability of our proposed method on
semi-supervised learning tasks. It surpassed the previous state of the art
(ladder nets) on FASHION and CIFAR-10, while being close on MNIST. Considering
the simplicity of our method, such results are encouraging. Indeed, a key
advantage of semantic loss is that it only requires a simple additional loss
term, and thus incurs almost no computational overhead. 
Conversely, this property makes our method sensitive to the underlying model's performance.

Without the underlying predictive power of a strong supervised learning model, we do not expect to see the same benefits we observe here. 
Recently, we became aware that \citet{miyato2016iclr} extended their work to CIFAR-10 and achieved state-of-the-art results \citep{miyato2017}, surpassing our performance by $5\%$. In future work, we plan to investigate whether applying semantic loss on their architecture would yield an even stronger performance.

Figure~\ref{figure: fashion} in the appendix illustrates the effect of semantic loss on FASHION pictures whose correct label was hidden from the learner.
Pictures~\ref{figure: fashion:correct_confident} and~\ref{figure: fashion:correct_unconfident} are correctly classified by the supervised base model, and on the first set it is confident about this prediction ($\pv_i > 0.8$). 
Semantic loss rarely diverts the model from these initially correct labels. However, it bootstraps these unlabeled examples to achieve higher confidence in the learned concepts. 
With this additional learning signal, the model changes its beliefs about Pictures~\ref{figure: fashion:incorrect_unconfident}, which it was previously uncertain about. 
Finally, even on confidently misclassified Pictures~\ref{figure: fashion:incorrect_confident}, semantic loss is able to remedy the mistakes of the base model.

\section{Learning with Complex Constraints}
\label{section: complex constraints}

\label{section:complex}
While much of current machine learning research is focused on problems such as multi-class classification, there remain a multitude of difficult problems involving highly constrained output domains. As mentioned in the previous section, semantic loss has little effect on the fully-supervised exactly-one classification problem. 
This leads us to seek out more difficult problems to illustrate that semantic
loss can also be highly informative in the supervised case, provided the output
domain is a sufficiently complex space. Because semantic loss is defined by a
Boolean formula, it can be used on any output domain that can be fully described
in this manner. Here, we develop a framework for making semantic loss tractable on highly complex constraints, and evaluate it on some difficult examples.

\begin{figure}[t]
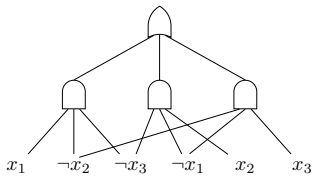

\centering
\scalebox{0.76} {
  \toycircuit{}
}
\caption{A compiled decomposable and deterministic circuit for the exactly-one constraint with 3 variables.}
\label{figure:circuit}
\end{figure}

\begin{figure} [t]
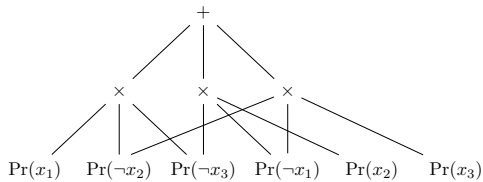

\centering
  \scalebox{0.70} {
\circuitComputation{} }
\caption{The corresponding arithmetic circuit for the exactly-one constraint
  with 3 variables.}
  \label{figure:arithmeticcircuit}
\end{figure}

\subsection{Tractability of Semantic Loss}

Our goal here is to develop a general method for computing both semantic loss and
its gradient in a tractable manner. Examining Definition~\ref{def:sloss} of
semantic loss, we see that the right-hand side is a well-known automated
reasoning task called weighted model counting
(WMC)~\citep{chavira2008,sang2005}. 

Furthermore, we know of circuit languages that compute
WMCs, and that are amenable to
backpropagation~\citep{DarwicheJACM}. We use the circuit
compilation techniques in \citet{darwiche2011sdd} to
build a Boolean circuit representing semantic loss. 
We refer to the literature for details of this compilation approach.
Due to certain properties of this circuit
form, we can use it to compute both the values and the gradients of semantic loss in time
linear in the size of the circuit \citep{darwicheJAIR02}. 
Once constructed, we can add it to our standard loss function as described in
Section~\ref{s:algo}.

Figure~\ref{figure:circuit} shows an example Boolean circuit for the exactly-one
constraint with 3 variables. We begin with the standard logical encoding for the
exactly-one constraint $(x_1 \lor x_2 \lor x_3) \land (\neg x_1 \lor \neg x_2)
\land (\neg x_1 \land \neg x_3) \land (\neg x_2 \land \neg x_3)$, and then
compile it into a circuit that can perform WMC efficiently \citep{chavira2008}. 
The cost of this step depends on the type of the constraint: for bounded-treewidth constraints it can be done efficiently, and for some constraints exact compilation is theoretically hard. In that case, we have to rely on advanced knowledge compilation algorithms to still perform this step efficiently in practice.
Our semantic loss framework can be applied regardless of how the circuit gets compiled. 
On our example, following the circuit bottom up, the logical
function can be read as $(x_1 \land \neg x_2 \land \neg x_3) \lor (\neg x_1
\land x_2 \land \neg x_3) \lor (\neg x_1 \land \neg x_2 \land x_3)$. Once this
Boolean circuit is built, we can convert it to an arithmetic circuit, by simply
changing {\tt AND} gates into $*$, and {\tt OR} gates into $+$, as shown in
Figure~\ref{figure:arithmeticcircuit}. Now, by pushing the probabilities up
through the arithmetic circuit, evaluating the root gives the probability of the logical
formula described by the Boolean circuit -- this is precisely the exponentiated semantic loss.
Notice that this computation was not possible with the Boolean formula we began with: it is a direct result of our circuit having two key properties called determinism and decomposability. Finally, we can similarly do another pass down on the circuit to compute partial
derivatives \citep{darwicheJAIR02}.

\subsection{Experimental Evaluation}
Our ambition when evaluating semantic loss'~performance on complex constraints
is not to achieve state-of-the-art performance on any particular problem, but
rather to highlight its effect. To
this end, we evaluate our method on problems with a difficult output space, where the model could no longer be fit directly from data, and
purposefully use simple MLPs for evaluation. We want to emphasize that the constraints used in this evaluation are intentionally designed to be very difficult; much more so than the simple implications that are usually studied (e.g., \citet{hu2016acl}). 
Hyper-parameter tuning details are again in Appendix~\ref{appendix:tuning}.

\paragraph{Grids}
We begin with a classic algorithmic problem: finding the shortest path in a graph. Specifically, we use a 4-by-4 grid $G=(V,E)$  with uniform edge weights. We randomly remove edges for each example to increase difficulty. Formally, our input is a
binary vector of length $|V| + |E|$, with the first $|V|$ variables indicating sources and destinations, and the next $|E|$ which edges are removed. Similarly, each
label is a binary vector of length $|E|$ indicating which
edges are in the shortest path. Finally, we require through our constraint $\alpha$ that
the output form a valid simple path between the desired source and destination.
To compile this constraint, we use the method of
\citet{NishinoYMN17a} to encode pairwise simple paths, and enforce the correct source and destination. For more details on the
constraint and data generation process, see
Appendix~\ref{a:complex}.

To evaluate, we use a dataset of 1600 examples, with a 60/20/20
train/validation/test split. 
Table~\ref{tab:gridres} compares test
accuracy between a 5-layer MLP baseline, and the same model augmented with
semantic loss. 
We report three different accuracies that illustrate the effect of semantic
loss: ``Coherent'' indicates the percentage of examples for which the classifier
gets the entire configuration right, while ``Incoherent'' measures the
percentage of individually correct binary labels, which as a whole may not
constitute a valid path at all. Finally, ``Constraint'' describes the percentage
of predictions given by the model that satisfy the constraint associated with
the problem.
In the case of incoherent accuracy, semantic loss has little effect, and in fact
slightly reduces the accuracy as it combats the standard sigmoid cross entropy.
In regard to coherent accuracy however, semantic loss has a very large
effect in guiding the network to jointly learn true paths, rather than
optimizing each binary output individually. We further see this by observing the
large increase in the percentage of predictions that really are paths between
the desired nodes in the graph.

\begingroup
\begin{table}[t]
\ra{1.05}
\centering
\caption {Grid shortest path test results: coherent, incoherent and constraint accuracy.}
\vskip -0.05in
\label{tab:gridres}
{\small
\begin{tabular}{ @{} l l l l }
Test accuracy \%  & Coherent & Incoherent & Constraint \\
\midrule \midrule
5-layer MLP & 5.62 & {\bf 85.91} & 6.99 \\
\midrule
+ Semantic loss & {\bf 28.51} & 83.14 & {\bf 69.89} \\
\end{tabular}
\vskip -0.05in
}
\end{table}

\begin{table}[t]
\ra{1.05}
\centering
\caption {Preference prediction test results: coherent, incoherent and constraint accuracy.}
\vskip -0.05in
\label{tab:prefres}
{\small
\begin{tabular}{@{}l l l l @{}}
Test accuracy \%  & Coherent & Incoherent & Constraint\\
\midrule \midrule
3-layer MLP & 1.01 & {\bf 75.78} & 2.72 \\
\midrule
+ Semantic loss & {\bf 13.59} & 72.43 & {\bf 55.28} \\
\end{tabular}
}
\vskip -0.1in
\end{table}
\endgroup

\paragraph{Preference Learning}
\label{subsubsection: preference learning}
The next problem is that of predicting a complete order of
preferences. That is, for a given set of user features, we want to predict
how the user ranks their preference over a fixed set of items. We encode
a preference ordering over \(n\) items as a flattened binary matrix $\left\{X_{ij}\right\}$, where for each \(i, j \in \{1,\ldots,n\}\), $X_{ij}$ denotes that
item \(i\) is at position \(j\) \citep{choi2015tractable}. Clearly, not all configurations 
of outputs correspond to a valid ordering, so our constraint allows only for those that are.

We use preference ranking data over 10 types of sushi for 5000
individuals, taken from \mbox{\textsc{PrefLib}} \citep{MaWa13a}. We take the ordering over
6 types of sushi as input features to predict the ordering over the remaining 4
types, with splits identical to those in \citet{ShenChoiDarwiche17}.
We again split the data 60/20/20 into train/test/split, and employ a 3-layer MLP
as our baseline. Table~\ref{tab:prefres} compares the baseline to the same MLP
augmented with semantic loss for valid total orderings. Again, we see that semantic
loss has a marginal effect on incoherent accuracy, but significantly improves the
network's ability to predict valid, correct orderings. Remarkably, without semantic loss, the network is only able to output a valid ordering on $1\%$ of examples.

\section{Related Work}

Incorporating symbolic background knowledge into machine learning is a long-standing challenge~\citep{srinivasan1995comparing}.
It has received considerable attention for structured prediction in natural language processing, in both supervised and semi-supervised settings. For example, \emph{constrained conditional models} extend linear models with constraints that are enforced through integer linear programming~\citep{chang2008learning,chang2013constrained}.
Constraints have also been studied in the context of probabilistic graphical models~\citep{mateescu2008mixed,ganchev2010posterior}. \citet{KisaVCD14} utilize a circuit language called the \emph{probabilistic sentential decision diagram} to induce distributions over arbitrary logical formulas. They learn generative models that satisfy preference and path constraints~\citep{choi2015tractable,ChoiTavabiDarwiche16}, which we study in a discriminative setting.

Various deep learning techniques have been proposed to enforce either arithmetic constraints~\citep{pathak2015constrained,marquez2017imposing} or logical constraints~\citep{rocktaschel2015injecting,hu2016acl,demeester2016lifted,stewart2017label,minervini2017adversarial,diligenti2017semantic,donadello2017logic} on the output of a neural network. The common approach is to reduce logical constraints into differentiable arithmetic objectives by replacing logical operators with their fuzzy t-norms and logical implications with simple inequalities. A downside of this fuzzy relaxation is that the logical sentences lose their precise meaning. 
The learning objective becomes a function of the syntax rather than the semantics (see Section~\ref{s:semi}). Moreover, these relaxations are often only applied to Horn clauses. One alternative is to encode the logic into a factor graph and perform loopy belief propagation to compute a loss function~\citep{naradowsky2017modeling}, which is known to have issues in the presence of complex logical constraints~\citep{smith2014loopy}.

Several specialized techniques have been proposed to exploit the rich structure of real-world labels. 
\citet{deng2014eccv} propose hierarchy and exclusion graphs that jointly model hierarchical categories. It is a method invented to address examples whose labels are not provided at the most specific level. 
Finally, the objective of semantic loss to increase the confidence of
predictions on unlabeled data is related to information-theoretic approaches
to semi-supervised learning~\citep{grandvalet2005semi,erkan2010pmlr}, and
approaches that increase robustness to output
perturbation~\citep{miyato2016iclr}.
A key difference between semantic loss and these information-theoretic losses is that semantic loss generalizes to arbitrary logical output constraints that are  much more complex.

\section{Conclusions \& Future Work}
Both reasoning and semi-supervised learning are often identified as key challenges for deep learning going forward.
In this paper, we developed a principled way of combining automated reasoning for propositional logic with existing deep learning architectures. Moreover, we showed that semantic loss provides significant benefits during semi-supervised classification, as well as deep structured prediction for highly complex output spaces.

An interesting direction for future work is to come up with effective approximations of semantic loss, for settings where even the methods we have described are not sufficient. There are several potential ways to proceed with this, including hierarchical abstractions, relaxations of the constraints, or projections on random subsets of variables.

\subsection*{Acknowledgements}
 This research was conducted while Zilu Zhang was a visiting student at StarAI Lab, UCLA. The authors thank Arthur Choi and Yujia Shen for helpful discussions.  This work is partially supported by NSF grants \#IIS-1657613, \#IIS-1633857 and DARPA XAI grant \#N66001-17-2-4032. 

\bibliography{reference}
\bibliographystyle{icml2018}

\appendix

\section{Axiomatization of Semantic Loss: Details} \label{a:axiomatization}

This appendix provides further details on our axiomatization of semantic loss. We detail here a complete axiomatization of semantic loss, which will involve restating some axioms and propositions from the main paper.

The first axiom says that there is no loss when the logical constraint $\alpha$ is always true (it is a logical tautology), independent of the predicted probabilities $\pv$.
\begin{axiom}[Truth] \label{ex:truth} \label{firstaxiom}
  The semantic loss of a true sentence is zero: $\forall \pv, \sloss(\true,\pv) = 0$. 
\end{axiom}

Next, when enforcing two constraints on disjoint sets of variables, we want the
ability to compute semantic loss for the two constraints separately, and sum
the results for their joint semantic loss.
\begin{axiom}[Additive Independence] \label{ex:addindep}
  Let $\alpha$ be a sentence over $\Xs$ with probabilities $\pv$. Let $\beta$ be a sentence over $\Ys$ disjoint from $\Xs$ with probabilities $\qv$. The semantic loss between sentence $\alpha \land \beta$ and the joint probability vector $\left[\pv \, \qv\right]$ decomposes additively: $\sloss(\alpha \land \beta,\left[\pv \, \qv\right]) = \sloss(\alpha,\pv) + \sloss(\beta,\qv)$.
\end{axiom}
It directly follows from Axioms~\ref{ex:truth} and \ref{ex:addindep} that the
probabilities of variables that are not used on the constraint do not affect the
semantic loss. 

Proposition~\ref{prop:locality} formalizes this intuition.

\begin{proposition}[Locality] \label{prop:locality}
  Let $\alpha$ be a sentence over $\Xs$ with probabilities $\pv$. For any $\Ys$ disjoint from $\Xs$ with probabilities $\qv$, the semantic loss $\sloss(\alpha,\left[\pv \, \qv\right]) = \sloss(\alpha,\pv)$.
  \begin{proof}
  Follows from the additive independence and truth axioms. Set $\beta = \true$ in the additive independence axiom, and observe that this sets $\sloss(\beta,\qv) = 0$ because of the truth axiom.
  \end{proof}
\end{proposition}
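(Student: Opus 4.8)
\textbf{Proof proposal for Proposition~\ref{prop:locality} (Locality).}

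The plan is to derive Locality directly from the Additive Independence axiom (Axiom~\ref{ex:addindep}) by instantiating its second constraint to the trivial one. Concretely, I would set $\beta = \true$, regarded as a sentence over the variable set $\Ys$, which is disjoint from $\Xs$ by hypothesis. Additive Independence then applies to the conjunction $\alpha \land \true$ paired with the joint probability vector $\left[\pv\,\qv\right]$, yielding $\sloss(\alpha \land \true, \left[\pv\,\qv\right]) = \sloss(\alpha,\pv) + \sloss(\true,\qv)$.

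Next I would simplify each side. For the right-hand side, the Truth axiom (Axiom~\ref{ex:truth}) gives $\sloss(\true,\qv) = 0$ for every probability vector $\qv$, so the right-hand side collapses to $\sloss(\alpha,\pv)$. For the left-hand side, since $\alpha \land \true \equiv \alpha$, the Semantic Equivalence property (Proposition~\ref{prop:equivalence})---itself a consequence of the Monotonicity axiom---guarantees $\sloss(\alpha \land \true, \left[\pv\,\qv\right]) = \sloss(\alpha, \left[\pv\,\qv\right])$. Chaining these two simplifications yields exactly $\sloss(\alpha, \left[\pv\,\qv\right]) = \sloss(\alpha,\pv)$, which is the claim.

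The only point requiring care---and the nearest thing to an obstacle, though a minor one---is the bookkeeping around variable scopes: I must treat $\true$ as a sentence whose variable set is $\Ys$ rather than $\Xs$, so that the disjointness hypothesis of Additive Independence is met, and I must ensure the joint vector in the axiom is blocked consistently with the $\left[\pv\,\qv\right]$ appearing in the statement. Once the scopes are lined up, the argument is a two-line substitution. One could even avoid invoking Proposition~\ref{prop:equivalence} by reading Additive Independence directly with $\alpha$ scoped to $\Xs$, but citing semantic equivalence keeps the step explicit and syntax-independent, which matches the spirit of the paper.
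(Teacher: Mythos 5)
Your proposal is correct and follows exactly the paper's own argument: instantiate $\beta = \true$ in the Additive Independence axiom and kill the $\sloss(\true,\qv)$ term with the Truth axiom. Your extra step of citing Semantic Equivalence to identify $\sloss(\alpha\land\true,\left[\pv\,\qv\right])$ with $\sloss(\alpha,\left[\pv\,\qv\right])$ is a small explicitness improvement over the paper's proof, which leaves that identification implicit, but it is not a different route.
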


To maintain logical meaning, we postulate that semantic loss is monotone in the order of implication.
\begin{axiom}[Monotonicity] \label{prop:monotonicity2}
  If $\alpha \models \beta$, then the semantic loss $\sloss(\alpha,\pv) \geq \sloss(\beta,\pv)$ for any vector $\pv$.
\end{axiom}
Intuitively, as we add stricter requirements to the logical constraint, going
from $\beta$ to $\alpha$ and making it harder to satisfy, semantic loss cannot decrease. For example, when $\beta$ enforces the output of an neural network to encode a subtree of a graph, and we tighten that requirement in $\alpha$ to be a path, semantic loss cannot decrease. Every path is also a tree and any solution to $\alpha$ is a solution to $\beta$.

A first consequence following the monotonicity axiom is that logically equivalent sentences must incur an identical semantic loss for the same probability vector \pv. Hence, the semantic loss is indeed a semantic property of the logical sentence, and \emph{does not depend on the syntax} of the sentence.
\begin{proposition}
  If $\alpha \equiv \beta$, then the semantic loss  $\sloss(\alpha,\pv) = \sloss(\beta,\pv)$ for any vector $\pv$.
\end{proposition}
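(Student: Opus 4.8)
The plan is to derive this directly from the Monotonicity axiom (Axiom~\ref{prop:monotonicity2}), applied in both directions. The only ingredient beyond that axiom is the definition of logical equivalence given in the Background section: $\alpha \equiv \beta$ holds precisely when $\alpha \models \beta$ \emph{and} $\beta \models \alpha$.

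First I would unpack the hypothesis $\alpha \equiv \beta$ into the two entailments $\alpha \models \beta$ and $\beta \models \alpha$. Applying the Monotonicity axiom to the first entailment yields $\sloss(\alpha,\pv) \geq \sloss(\beta,\pv)$ for every probability vector $\pv$. Applying it to the second entailment yields $\sloss(\beta,\pv) \geq \sloss(\alpha,\pv)$ for every $\pv$. Combining the two inequalities gives $\sloss(\alpha,\pv) = \sloss(\beta,\pv)$ for every $\pv$, which is exactly the claim.

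There is essentially no obstacle here: the statement is an immediate corollary of antisymmetry of $\geq$ together with the fact that Monotonicity is stated as a one-directional inequality that can be invoked for each direction of the equivalence. The only thing to be careful about is that the Monotonicity axiom is quantified over all $\pv$, so both applications are legitimate for the same arbitrary $\pv$, and hence the resulting equality holds for all $\pv$ as stated. No properties of the specific Definition~\ref{def:sloss} are needed — this is purely an axiomatic consequence, which is in fact the point the paper is making (semantic loss depends only on the semantics, not the syntax, of the constraint).
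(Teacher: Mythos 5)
Your proof is correct and matches the paper's intended argument: the paper presents this proposition as a direct consequence of the Monotonicity axiom, obtained by applying it to both entailments $\alpha \models \beta$ and $\beta \models \alpha$ and combining the resulting inequalities. Nothing further is needed.
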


A second consequence is that semantic loss must be non-negative.

\begin{proposition}[Non-Negativity] \label{prop:nonneg}
 Semantic loss is non-negative.
\begin{proof}
  Because $\alpha \models \true$ for all $\alpha$, the monotonicity axiom implies that $\forall \pv, \sloss(\alpha,\pv) \geq \sloss(\true,\pv)$. By the truth axiom, $\sloss(\true,\pv) = 0$, and therefore $\sloss(\alpha,\pv) \geq 0$ for all choices of $\alpha$ and $\pv$. 
 \end{proof}
\end{proposition}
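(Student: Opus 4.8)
The plan is to derive non-negativity purely from the axioms already in hand, rather than from Definition~\ref{def:sloss}, so that the result properly belongs to the axiomatic development and does not presuppose the closed form of $\sloss$. The only structural fact needed is that the tautology $\true$ sits at the top of the entailment order: every sentence $\alpha$ over $\Xs$ satisfies $\alpha \models \true$, since every world $\xs$ that satisfies $\alpha$ vacuously satisfies $\true$. This is immediate from the semantics of entailment recalled in the background section, and it is the one observation the argument hinges on.

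From there the proof is two steps. First, I would instantiate the Monotonicity axiom (Axiom~\ref{prop:monotonicity2}) with the entailment $\alpha \models \true$: for every probability vector $\pv$ this gives $\sloss(\alpha,\pv) \geq \sloss(\true,\pv)$. Second, I would invoke the Truth axiom (Axiom~\ref{ex:truth}), which fixes $\sloss(\true,\pv) = 0$ for all $\pv$. Chaining the two inequalities yields $\sloss(\alpha,\pv) \geq 0$ for every $\alpha$ and every $\pv$, which is exactly the claim; I would present it as a one-line corollary.

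There is essentially no obstacle here — the proposition is a direct consequence of Monotonicity and Truth — so the ``hard part'' is only bookkeeping: checking that the Truth axiom is indeed available at this point in the development (it is, as Axiom~\ref{ex:truth}) and that Monotonicity is stated uniformly in $\pv$ (it is), so that the bound holds for all probability vectors at once. As an independent sanity check one can also read the statement off Definition~\ref{def:sloss}: the weighted model count $\sum_{\xs \models \alpha}\prod_{i:\xs\models \Xr_i}\pv_i\prod_{i:\xs\models\neg \Xr_i}(1-\pv_i)$ is a sum of products of numbers in $[0,1]$, hence lies in $[0,1]$, so its negative logarithm is non-negative (with the unsatisfiable case giving $+\infty$); but I would keep the axiomatic argument as the primary proof, since this proposition is meant to follow from the axioms, not from the formula.
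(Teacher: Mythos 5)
Your argument is exactly the paper's proof: instantiate Monotonicity with $\alpha \models \true$ to get $\sloss(\alpha,\pv) \geq \sloss(\true,\pv)$, then apply the Truth axiom to conclude $\sloss(\alpha,\pv) \geq 0$. The proposal is correct and takes essentially the same approach; the extra sanity check via Definition~\ref{def:sloss} is fine but not needed.
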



A state $\xs$ is equivalently represented as a data vector, as well as a logical constraint that enforces a value for every variable in $\Xs$. 
When both the constraint and the predicted vector represent the same state (for example, $X_1 \land \neg X_2 \land X_3$ vs.~$[1\,0\,1]$), there should be no semantic~loss.
\begin{axiom}[Identity] \label{ex:identity2}
  For any state $\xs$, there is zero semantic loss between its representation as a sentence, and its representation as a deterministic vector: $\forall \xs, \sloss(\xs,\xs) = 0$. 
\end{axiom}

The axioms above together imply that any vector satisfying the constraint must incur zero loss. For example, when our constraint $\alpha$ requires that the output vector encodes an arbitrary total ranking, and the vector $\xs$ correctly represents a single specific total ranking, there is no semantic loss.
\begin{proposition}[Satisfaction] \label{prop:satisfaction2}
  If $\xs \models \alpha$, then the semantic loss  $\sloss(\alpha,\xs) = 0$.
\end{proposition}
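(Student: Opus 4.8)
The goal is to prove Proposition~\ref{prop:satisfaction2}: if $\xs \models \alpha$, then $\sloss(\alpha,\xs) = 0$, using only the axioms stated above (Truth, Additive Independence, Monotonicity, Identity) together with their already-established consequences (Locality, Non-Negativity). The plan is to sandwich $\sloss(\alpha,\xs)$ between zero on both sides.

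First I would observe that since $\xs$ is a full instantiation of the variables $\Xs$ and $\xs \models \alpha$, we have the logical entailment $\xs \models \alpha$ in the exact form required by the Monotonicity axiom (here the ``stricter'' sentence is the state $\xs$ itself, viewed as a conjunction of literals, and the ``weaker'' sentence is $\alpha$). Applying Monotonicity with the state-as-sentence playing the role of $\alpha$ and the sentence $\alpha$ playing the role of $\beta$ gives $\sloss(\xs,\xs) \geq \sloss(\alpha,\xs)$ for the probability vector $\xs$. By the Identity axiom, $\sloss(\xs,\xs) = 0$, so we conclude $\sloss(\alpha,\xs) \leq 0$.

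For the reverse direction, I would invoke Non-Negativity (Proposition~\ref{prop:nonneg}), which was already derived from the Truth and Monotonicity axioms and gives $\sloss(\alpha,\xs) \geq 0$. Combining the two inequalities yields $\sloss(\alpha,\xs) = 0$, as desired.

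The only subtlety — and the one point worth stating carefully rather than an actual obstacle — is the dual use of the symbol $\xs$: it denotes simultaneously an assignment, the conjunctive sentence enforcing that assignment, and the deterministic $0/1$ probability vector. The Background section explicitly declares these to be interchangeable notions, so the application of Monotonicity (which needs $\xs$ as a sentence) and of Identity (which needs $\xs$ both as a sentence and as a vector) is legitimate. Everything else is a two-line squeeze argument, so there is no heavy lifting.

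\begin{proof}
Since $\xs$ is a complete state with $\xs \models \alpha$, the Monotonicity axiom (Axiom~\ref{prop:monotonicity2}), applied with the sentence $\xs$ in the role of the stronger constraint and $\alpha$ in the role of the weaker one, gives $\sloss(\xs,\xs) \geq \sloss(\alpha,\xs)$. By the Identity axiom (Axiom~\ref{ex:identity2}), $\sloss(\xs,\xs) = 0$, hence $\sloss(\alpha,\xs) \leq 0$. On the other hand, Non-Negativity (Proposition~\ref{prop:nonneg}) gives $\sloss(\alpha,\xs) \geq 0$. Therefore $\sloss(\alpha,\xs) = 0$.
\end{proof}
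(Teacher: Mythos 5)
Your proof is correct and follows exactly the same route as the paper's: apply Monotonicity to the entailment $\xs \models \alpha$ with the probability vector set to $\xs$, use the Identity axiom to get the upper bound of zero, and use Non-Negativity (Proposition~\ref{prop:nonneg}) for the lower bound. Your remark about the deliberate overloading of $\xs$ as assignment, sentence, and vector is a fair and accurate reading of the paper's notational conventions.
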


\begin{proof}[Proof of Proposition~\ref{prop:satisfaction2}]
 The monotonicity axiom specializes to say that if $\xs \models \alpha$, we have that $\forall \pv, \sloss(\xs,\pv) \geq \sloss(\alpha,\pv)$. By choosing $\pv$ to be $\xs$, this implies $\sloss(\xs,\xs) \geq \sloss(\alpha,\xs)$. From the identity axiom, $\sloss(\xs,\xs) = 0$, and therefore $0 \geq \sloss(\alpha,\xs)$. Proposition~\ref{prop:nonneg} bounds the loss from below as $\sloss(\alpha,\xs) \geq 0$.
\end{proof}

As a special case, logical literals ($x$ or $\neg x$) constrain a single variable to take on a single value, and thus play a role similar to the labels used in supervised learning. Such constraints require an even tighter correspondence: semantic loss must act like a classical loss function (i.e., cross entropy).
\begin{axiom}[Label-Literal Correspondence] \label{ax:labelcorrespondence2}
  The semantic loss of a single literal is proportionate to the cross-entropy loss for the equivalent data label: $\sloss(x,p)  \propto -\log(p)$ and $\sloss(\neg x,p)  \propto -\log(1-p)$.
\end{axiom}

Next, we have the symmetry axioms.

\begin{axiom}[Value Symmetry] \label{ax:valsym}
  For all $\pv$ and $\alpha$, we have that $\sloss(\alpha,\pv) = \sloss(\bar{\alpha},1-\pv)$ where $\bar{\alpha}$ replaces every variable in $\alpha$ by its negation.
\end{axiom}

\begin{axiom}[Variable Symmetry] \label{ax:varsym}
  Let $\alpha$ be a sentence over $\Xs$ with probabilities $\pv$. Let $\pi$ be a permutation of the variables $\Xs$, let $\pi(\alpha)$ be the sentence obtained by replacing variables $x$ by $\pi(x)$, and let $\pi(\pv)$ be the corresponding permuted vector of probabilities. Then, $\sloss(\alpha,\pv) = \sloss(\pi(\alpha),\pi(\pv))$.
\end{axiom}

The value and variable symmetry axioms together imply the equality of the multiplicative constants in the label-literal duality axiom for all literals.
\begin{lemma}\label{l:consteq}
  There exists a single constant $K$ such that $\sloss(X,p)  = -K\log(p)$ and $\sloss(\neg X,p) = - K \log(1-p)$ for any literal $x$.
  \begin{proof}
   Value symmetry implies that $\sloss(X_i,\pv) = \sloss(\neg X_i,1-\pv)$. Using label-literal correspondence, this implies $K_1 \log(p_i)$ = $K_2 \log(1-(1-p_i))$ for the multiplicative constants $K_1$ and $K_2$ that are left unspecified by that axiom. This implies that the constants are identical. A similar argument based on variable symmetry proves equality between the multiplicative constants for different~$i$.
  \end{proof}
\end{lemma}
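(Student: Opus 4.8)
The plan is to begin from the Label-Literal Correspondence axiom, which on its own only tells us that $\sloss(X_i,p) = -K_i^{+}\log p$ and $\sloss(\neg X_i,p) = -K_i^{-}\log(1-p)$ for some positive constants $K_i^{+}, K_i^{-}$ that are a priori allowed to depend both on the variable index $i$ and on the polarity of the literal. The goal is to collapse all of these into a single constant $K$, and the two symmetry axioms are exactly what is needed to do so. No auxiliary constructions are required beyond instantiating these axioms at the right sentences and probability vectors.

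First I would fix a variable $X_i$ and instantiate Value Symmetry with $\alpha = X_i$. Since $\bar{\alpha} = \neg X_i$, this yields $\sloss(X_i,p) = \sloss(\neg X_i, 1-p)$ for every $p \in [0,1]$. Substituting the two forms supplied by Label-Literal Correspondence gives $-K_i^{+}\log p = -K_i^{-}\log\bigl(1-(1-p)\bigr) = -K_i^{-}\log p$, and since this is an identity in $p$ and $\log p$ is not identically zero, we conclude $K_i^{+} = K_i^{-}$; call this common value $K_i$. Next I would use Variable Symmetry to remove the dependence on $i$: for any pair $i \neq j$, let $\pi$ be the transposition swapping $X_i$ and $X_j$ and fixing every other variable, so that $\pi(X_i) = X_j$. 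Applying the axiom to $\alpha = X_i$ gives $\sloss(X_i,\pv) = \sloss(X_j,\pi(\pv))$; choosing $\pv$ whose $i$-th entry equals $p$, the $j$-th entry of $\pi(\pv)$ also equals $p$, and invoking Locality (Proposition~\ref{prop:locality}) to discard the irrelevant coordinates reduces this to $-K_i\log p = -K_j\log p$, hence $K_i = K_j$. Thus there is a single constant $K := K_1 = \cdots = K_n$ with the claimed property.

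I do not anticipate a genuine obstacle here; the only thing to be careful about is bookkeeping. One must treat the multiplicative constants handed over by Label-Literal Correspondence as genuinely unspecified and potentially distinct until the symmetry axioms are brought to bear, and one must remember to appeal to Locality (or, equivalently, to argue directly in the single-variable setting) so that the probabilities attached to the remaining variables in $\pv$ cannot interfere when comparing $\sloss(X_i,\cdot)$ with $\sloss(X_j,\cdot)$.
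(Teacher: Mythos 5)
Your proof is correct and follows essentially the same route as the paper's: value symmetry collapses the positive/negative-literal constants for a fixed variable, and variable symmetry (via a transposition) collapses the constants across variables. Your explicit appeal to Locality when comparing $\sloss(X_i,\cdot)$ with $\sloss(X_j,\cdot)$ is a detail the paper leaves implicit in its ``similar argument'' remark, but it is the same argument, just spelled out more carefully.
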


Finally, this allows us to prove the following form of semantic loss for a state $\xs$.
\begin{lemma}\label{l:slossstate2}
For state $\xs$ and vector $\pv$, we have $\sloss(\xs,\pv) \propto -\sum_{i: \xs \models \Xr_i} \log \pv_i - \sum_{i: \xs \models \neg \Xr_i} \log (1-\pv_i).$
\end{lemma}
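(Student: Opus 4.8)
The plan is to decompose the state $\xs$ into a conjunction of single‑variable literals, evaluate the loss of each literal with Lemma~\ref{l:consteq}, and glue the pieces together with the additive independence axiom. Writing $\Xs = \{X_1,\dots,X_n\}$, note that $\xs$ is logically equivalent to $\ell_1 \land \cdots \land \ell_n$, where $\ell_i$ is the literal $X_i$ if $\xs \models X_i$ and $\neg X_i$ otherwise. Each $\ell_i$ is a sentence over the singleton variable set $\{X_i\}$, and these singletons are pairwise disjoint, so the additive independence axiom is applicable.

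First I would prove by induction on $n$ that $\sloss(\xs,\pv) = \sum_{i=1}^{n} \sloss(\ell_i,\pv_i)$. The base case $n=1$ is immediate. For the inductive step, split $\xs$ as $(\ell_1 \land \cdots \land \ell_{n-1}) \land \ell_n$: the first conjunct is a sentence over $\{X_1,\dots,X_{n-1}\}$ with probabilities $(\pv_1,\dots,\pv_{n-1})$, and the second is over $\{X_n\}$ with probability $\pv_n$, so additive independence gives $\sloss(\xs,\pv) = \sloss(\ell_1 \land \cdots \land \ell_{n-1}, (\pv_1,\dots,\pv_{n-1})) + \sloss(\ell_n,\pv_n)$, and the induction hypothesis finishes it. (If one worries about the placement of entries inside the probability vector when peeling off an arbitrary $X_i$ rather than the last one, the variable symmetry axiom lets us permute first; choosing to peel from the right avoids needing this at all.)

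Next I would invoke Lemma~\ref{l:consteq}: there is a single constant $K$ with $\sloss(X_i,\pv_i) = -K\log \pv_i$ and $\sloss(\neg X_i,\pv_i) = -K\log(1-\pv_i)$ for every $i$. Substituting into the sum from the previous step yields $\sloss(\xs,\pv) = -K\bigl(\sum_{i:\,\xs \models X_i} \log \pv_i + \sum_{i:\,\xs \models \neg X_i} \log(1-\pv_i)\bigr)$, which is exactly the claimed identity up to the multiplicative constant $K$, i.e.\ the ``$\propto$'' in the statement.

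The one step requiring genuine care — and the main obstacle — is the bookkeeping of multiplicative constants: the additive independence axiom by itself would permit each literal to contribute with its own constant, so it is essential that Lemma~\ref{l:consteq} (a consequence of the value‑ and variable‑symmetry axioms together with label‑literal correspondence) has already forced one shared constant $K$ across all variables and both polarities. Once that is in hand, the remainder is just the inductive decomposition above.
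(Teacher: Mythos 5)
Your proof is correct and follows essentially the same route as the paper's: decompose the state into a conjunction of literals, apply the additive independence axiom (iterated), and use Lemma~\ref{l:consteq} to supply the single shared constant for each literal's loss. The paper's own proof is just a two-sentence version of this argument; your induction and the remark about the shared constant $K$ merely make explicit the details it leaves implicit.
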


\begin{proof}[Proof of Lemma~\ref{l:slossstate2}]
  A state $\xs$ is a conjunction of independent literals, and therefore subject to the additive independence axiom. Each literal's loss in this sum is defined by Lemma~\ref{l:consteq}.
\end{proof}

The following and final axiom requires that semantic loss is proportionate to the logarithm of a function that is additive for mutually exclusive sentences.
\begin{axiom}[Exponential Additivity] \label{lastaxiom}
 Let $\alpha$ and $\beta$ be mutually exclusive sentences (i.e., $\alpha \land \beta$ is unsatisfiable), and let $f^s(K,\alpha,\pv) = K^{-\sloss(\alpha,\pv)}$. Then, there exists a positive constant $K$ such that $f^s(K,\alpha \lor \beta,\pv) = f^s(K,\alpha,\pv) + f^s(K,\beta,\pv)$.
\end{axiom}

We are now able to state and prove the main uniqueness theorem.

\begin{theorem}[Uniqueness] \label{thm:unique2}
  The semantic loss function in Definition~\ref{def:sloss} satisfies
  all axioms in Appendix~\ref{a:axiomatization} and is the only function that does so, up to a multiplicative constant.
\end{theorem}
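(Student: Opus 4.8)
The plan is to prove the two halves of the statement separately: first that the function of Definition~\ref{def:sloss} verifies every axiom of the appendix, and then that these axioms determine it up to a single positive multiplicative constant. For both halves the key device is to rewrite the right-hand side of Definition~\ref{def:sloss} as a weighted model count, $W(\alpha,\pv) = \sum_{\xs\models\alpha}\prod_{i:\xs\models X_i}\pv_i\prod_{i:\xs\models\neg X_i}(1-\pv_i)$, so that $\sloss(\alpha,\pv) = -c\log W(\alpha,\pv)$ for a fixed $c>0$.

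For the verification half I expect each axiom to reduce to an elementary property of $W$. $W(\true,\pv)=1$ since it is the total mass of a product distribution, which gives Truth; $W$ multiplies over disjoint variable scopes and $-\log$ turns that product into a sum, giving Additive Independence; $\alpha\models\beta$ means the models of $\alpha$ are a subset of those of $\beta$ and all summands are non-negative, so $W(\alpha,\pv)\le W(\beta,\pv)$ and $-\log$ reverses the inequality, giving Monotonicity; a state scored against its own $0/1$ vector has a single model of weight $1$, giving Identity, and a literal has a single model of weight $p$ (resp.\ $1-p$), giving Label-Literal Correspondence; flipping every variable while replacing $\pv$ by $1-\pv$, and permuting variables while permuting $\pv$, are weight-preserving bijections on the model set, giving Value and Variable Symmetry; $W$ is a multilinear polynomial with non-negative coefficients, hence affine and monotone in each coordinate and smooth wherever positive (the interior of the cube for satisfiable $\alpha$), giving Differentiability; and for mutually exclusive $\alpha,\beta$ the model sets are disjoint, so $W(\alpha\lor\beta,\pv)=W(\alpha,\pv)+W(\beta,\pv)$, and picking $K=e^{1/c}$ makes $f^s(K,\cdot,\pv)=W(\cdot,\pv)$, giving Exponential Additivity.

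For the uniqueness half I would build on what the appendix already establishes: by Lemmas~\ref{l:consteq} and~\ref{l:slossstate2}, the axioms already force $\sloss(\xs,\pv) = -C\log W(\xs,\pv)$ on every state $\xs$, for one global constant $C>0$ — and this $C$ is exactly the multiplicative freedom in the statement. The idea is to bootstrap from states to arbitrary sentences. Every $\alpha$ is logically equivalent to the disjunction $\bigvee_{\xs\models\alpha}\xs$ of its models, and distinct states are pairwise mutually exclusive; by Semantic Equivalence (Proposition~\ref{prop:equivalence}) we may evaluate $\sloss$ on this disjunctive form, and a routine induction on Exponential Additivity then gives $K^{-\sloss(\alpha,\pv)} = \sum_{\xs\models\alpha}K^{-\sloss(\xs,\pv)} = \sum_{\xs\models\alpha} W(\xs,\pv)^{\,C\log K}$, where $K$ is the positive constant supplied by that axiom. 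To remove the exponent $C\log K$, apply this identity to the tautology $\alpha=\true$ over a single variable: $1 = K^{0} = p^{\,C\log K} + (1-p)^{\,C\log K}$ for all $p\in[0,1]$ forces $C\log K = 1$, whence $\sloss(\alpha,\pv) = -\log_K\sum_{\xs\models\alpha}W(\xs,\pv) = C\cdot\bigl(-\ln\sum_{\xs\models\alpha}W(\xs,\pv)\bigr)$, which is Definition~\ref{def:sloss} up to $C$ (the unsatisfiable case is covered by the empty-sum convention $-\log 0 = +\infty$, which is also what Monotonicity forces against arbitrarily long literal conjunctions).

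I expect the verification half to be entirely mechanical once $W$ is in hand; the real content — and the step I would watch most carefully — is the uniqueness half, where two things must click: recognizing that rewriting $\alpha$ as the mutually exclusive disjunction of its models is exactly the shape Exponential Additivity needs in order to be applied inductively, and noticing that the residual constant $K$ in that axiom is pinned down not by the literal axioms (which fixed $C$) but by the simplest tautology. Getting the interplay of the two constants $C$ and $K$ right, and confirming that the only surviving degree of freedom is the single multiplicative constant advertised in the theorem, is the crux.
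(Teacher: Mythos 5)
Your proof is correct and follows essentially the same route as the paper: the paper's own (very terse) argument observes that $f^s(K,\cdot,\pv)=K^{-\sloss(\cdot,\pv)}$ satisfies the Kolmogorov axioms --- Truth gives normalization, Additive Independence and Exponential Additivity give additivity over mutually exclusive sentences --- hence is a probability distribution determined by its values on states, which Lemma~\ref{l:slossstate2} fixes. You simply make explicit what the paper compresses (the decomposition of $\alpha$ into the mutually exclusive disjunction of its models, the induction on Exponential Additivity, and the fact that $C\log K=1$ is forced by the single-variable tautology), and you also supply the verification half, which the paper's proof omits entirely; both additions are correct and improve on the published argument in rigor.
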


\begin{proof}[Proof of Theorem~\ref{thm:unique2}]
 The truth axiom states that $\forall \pv, f^s(K,\true,\pv) = 1$ for all positive constants~$K$. This is the first  Kolmogorov axiom of probability. The second Kolmogorov axiom for $f^s(K,.,\pv)$ follows from the additive independence axiom of semantic loss. The third Kolmogorov axiom (for the finite discrete case) is given by the exponential additivity axiom of semantic loss. Hence, $f^s(K,.,\pv)$ is a probability distribution for some choice of $K$, which implies the definition up to a multiplicative constant.
\end{proof}

\begin{figure*}[th]
\centering
  \begin{subfigure}[b]{0.2455\textwidth} \centering
    \includegraphics[width=0.50\linewidth]{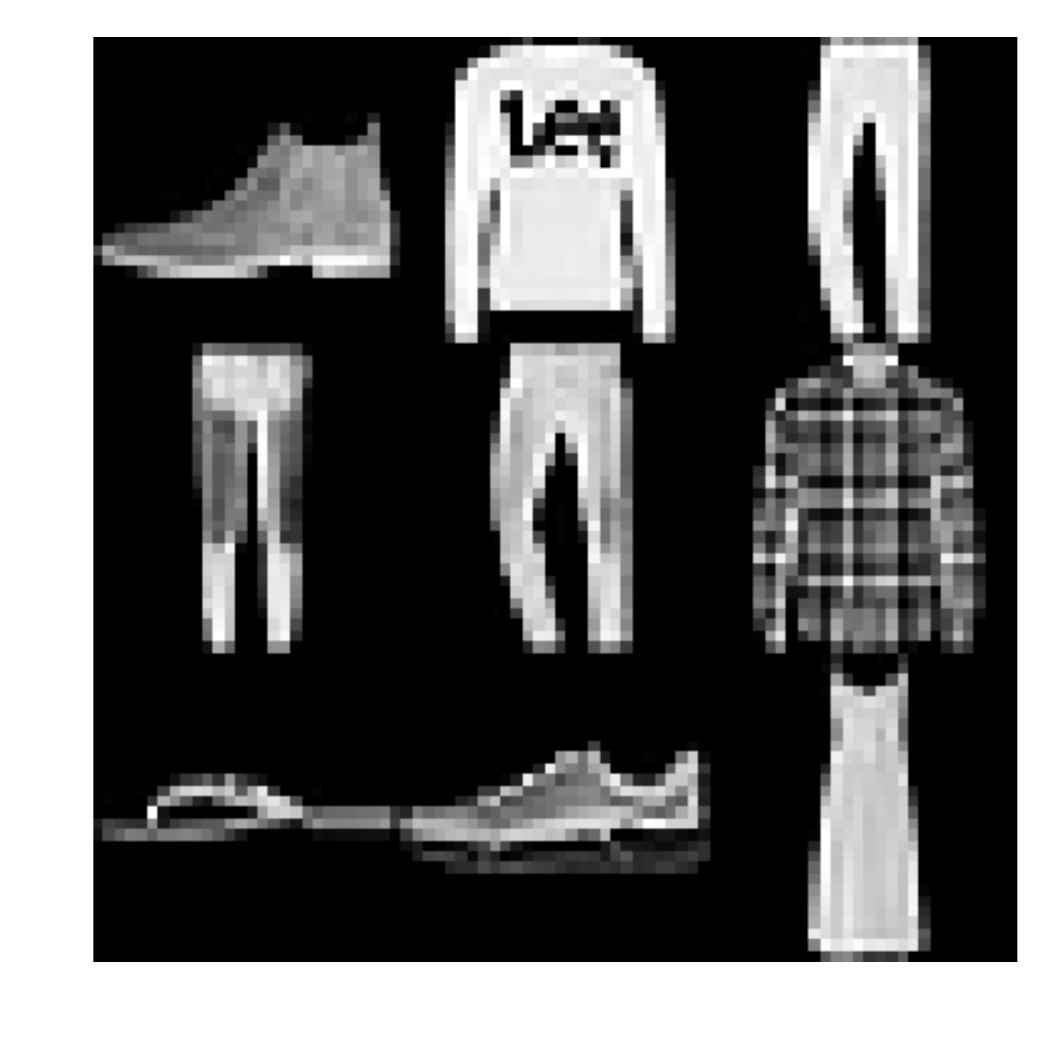}
    \caption{Confidently Correct} \label{figure: fashion:correct_confident}
  \end{subfigure}
  \begin{subfigure}[b]{0.2455\textwidth} \centering
    \includegraphics[width=0.50\linewidth]{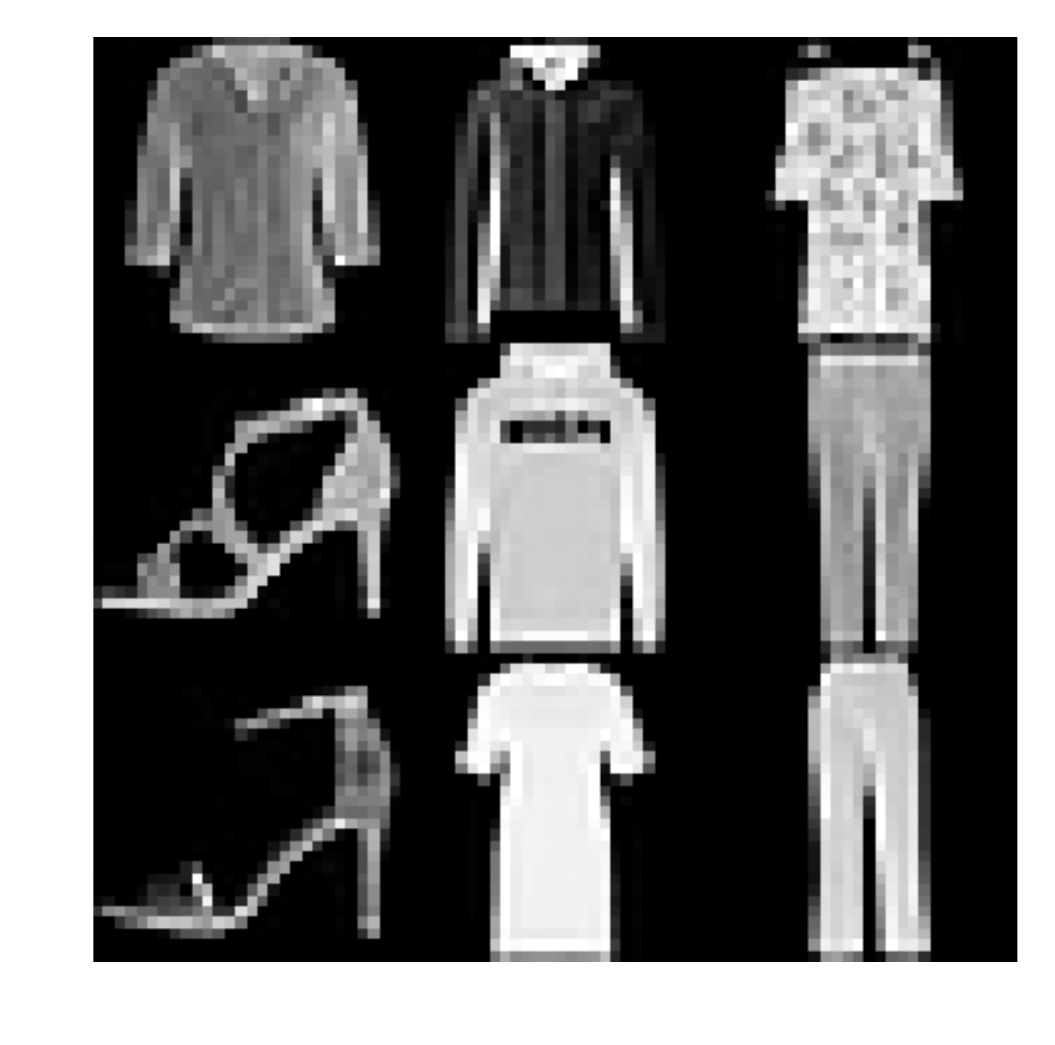}
    \caption{Unconfidently Correct} \label{figure: fashion:correct_unconfident}
  \end{subfigure}
  \begin{subfigure}[b]{0.2455\textwidth} \centering
    \includegraphics[width=0.50\linewidth]{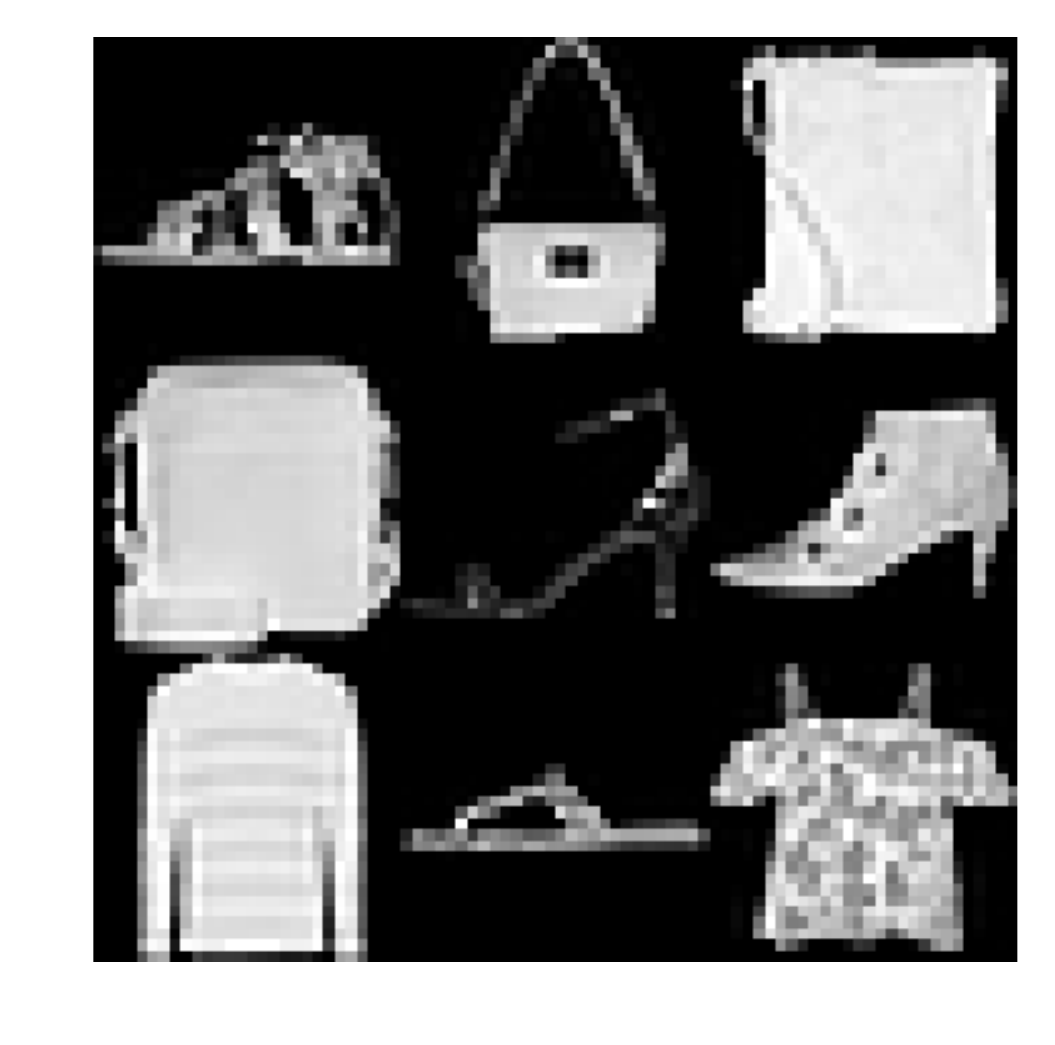}
    \caption{Unconfidently Incorrect} \label{figure: fashion:incorrect_unconfident}
  \end{subfigure}
  \begin{subfigure}[b]{0.2455\textwidth} \centering
    \includegraphics[width=0.50\linewidth]{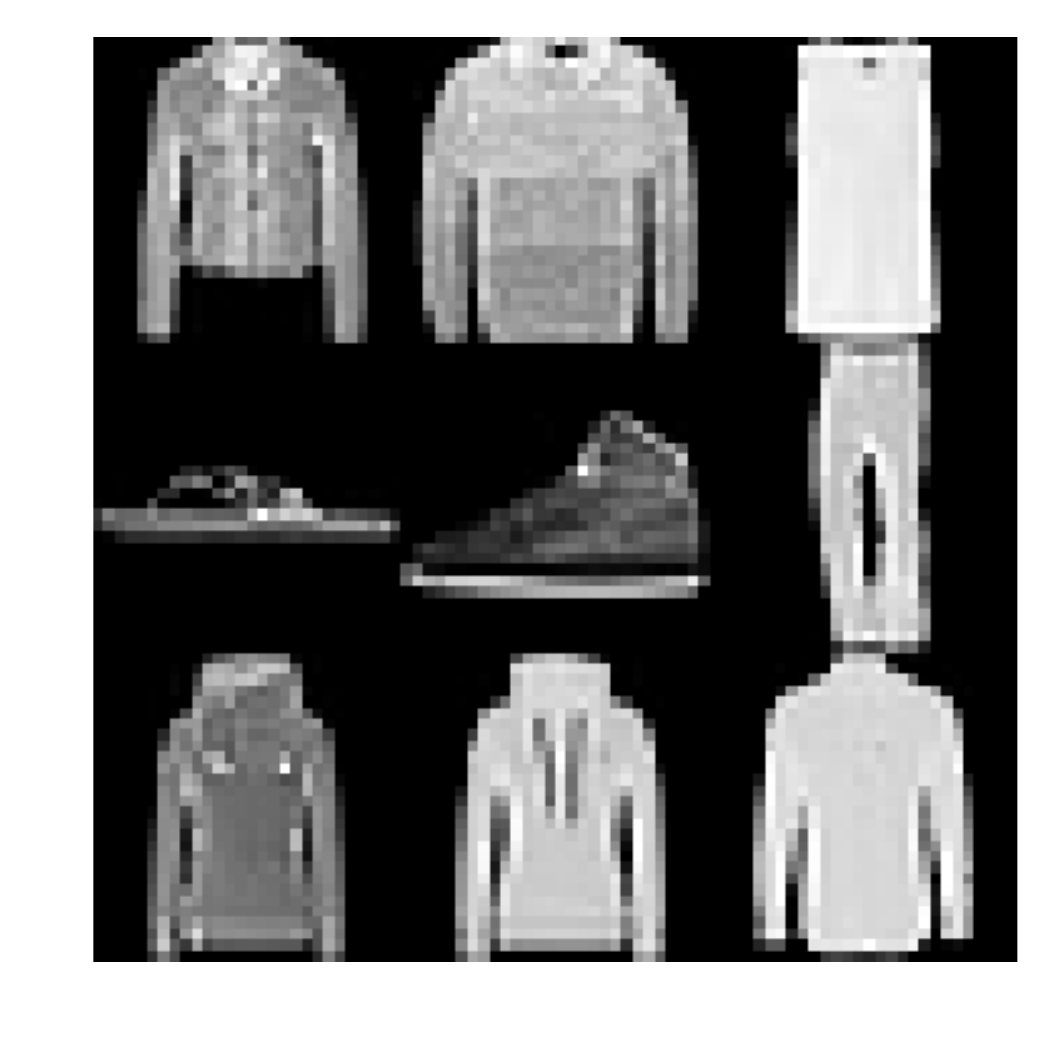}
    \caption{Confidently Incorrect} \label{figure: fashion:incorrect_confident}
  \end{subfigure}
   \caption{FASHION pictures grouped by how confidently the supervised base model classifies them correctly. With semantic loss, the final semi-supervised model predicts all correctly and confidently. }
   \label{figure: fashion}
\end{figure*}

\section{Specification of the Convolutional Neural Network Model} \label{appendix:specification}
Table~\ref{table:specification} shows the slight architectural difference between the CNN used in ladder nets and ours. The major difference lies in the choice of ReLu. Note we add standard padded cropping to preprocess images and an additional fully connected layer at the end of the model, neither is used in ladder nets. We only make those slight modification so that the baseline performance reported by  \citet{rasmus2015semi} can be reproduced.
\begin{table*}[htb]
\ra{1.02}
\centering \small
\caption {Specifications of CNNs in Ladder Net and our proposed method. }
\label{table:specification}
\begin{tabular}{  @{} c | c @{} }
CNN in Ladder Net & CNN in this paper\\
\midrule \midrule
\multicolumn{2}{c}{Input 32$\times$32 RGB image}\\
\midrule
& Resizing to $36\times36$ with padding \\
& Cropping Back \\
\midrule
\multicolumn{2}{c}{Whitening} \\
\multicolumn{2}{c}{Contrast Normalization}\\
\multicolumn{2}{c}{Gaussian Noise with std. of 0.3}\\
\midrule
3$\times$3 conv. 96 BN LeakyReLU  & 3$\times$3 conv. 96 BN ReLU\\
3$\times$3 conv. 96 BN LeakyReLU & 3$\times$3 conv. 96 BN ReLU\\
3$\times$3 conv. 96 BN LeakyReLU & 3$\times$3 conv. 96 BN ReLU\\
\midrule
\multicolumn{2}{c}{2$\times$2 max-pooling stride 2 BN}\\
\midrule
3$\times$3 conv. 192 BN LeakyReLU &3$\times$3 conv. 192 BN ReLU\\
3$\times$3 conv. 192 BN LeakyReLU & 3$\times$3 conv. 192 BN ReLU\\
3$\times$3 conv. 192 BN LeakyReLU & 3$\times$3 conv. 192 BN ReLU\\
\midrule
\multicolumn{2}{c}{2$\times$2 max-pooling stride 2 BN}\\
\midrule
3$\times$3 conv. 192 BN LeakyReLU & 3$\times$3 conv. 192 BN ReLU\\
1$\times$1 conv. 192 BN LeakyReLU & 3$\times$3 conv. 192 BN ReLU\\
1$\times$1 conv. 10~~~BN LeakyReLU & 1$\times$1 conv. 10~~~BN ReLU\\
\midrule
\multicolumn{2}{c}{Global meanpool BN}\\
\midrule
& Fully connected BN \\
\midrule
\multicolumn{2}{c}{10-way softmax}\\
\end{tabular}
\end{table*}

\section{Hyper-parameter Tuning Details} \label{appendix:tuning}
Validation sets are used for tuning the weight associated with semantic loss, the only hyper-parameter that causes noticeable difference in performance for our method. 
For our semi-supervised classification experiments, we perform a grid search over $\left\{0.001, 0.005, 0.01, 0.05, 0.1 \right\}$ to find the optimal value. Empirically, $0.005$ always gives the best or nearly the best results and we report its results on all experiments.

For the FASHION dataset specifically, because MNIST and FASHION share the same image size and structure, methods developed in MNIST should be able to directly perform on FASHION without heavy modifications. Because of this, we use the same hyper-parameters when evaluating our method. However, for the sake of fairness, we subject ladder nets to a small-scale parameter tuning in case its performance is more volatile.

For the grids experiment, the only hyper parameter that needed to be
tuned was again the weight given to semantic loss, which after trying $\left\{0.001,
  0.005, 0.01, 0.05, 0.1, 0.5, 1 \right\}$ was selected to be 0.5 based on
validation results.
For the preference learning experiment, we initially chose the semantic loss weight from $\left\{0.001,
  0.005, 0.01, 0.05, 0.1, 0.5, 1 \right\}$ to be 0.1 based on validation, and
then further tuned the weight to 0.25. 

\section{Specification of Complex Constraint Models} \label{a:complex}
\paragraph{Grids}
To compile our grid constraint, we first use \citet{NishinoYMN17a} to generate a
constraint for each source destination pair. Then, we conjoin each of these with
indicators specifying which source and destination pair must be used, and
finally we disjoin all of these together to form our constraint.

To generate the data, we begin by randomly removing one third of edges. We then
filter out connected components with fewer than 5 nodes to reduce degenerate
cases, and proceed with randomly selecting pairs of nodes to create data
points. 

The predictive model we employ as our baseline is a 5 layer MLP with 50 hidden
sigmoid units per layer. It is trained using Adam Optimizer, with full data batches
\citep{kingma2015adam}. Early stopping with respect to validation loss is used
as a regularizer.

\paragraph{Preference Learning}
We split each user's ordering into their ordering over sushis 1,2,3,5,7,8, which
we use as the features, and their ordering over 4,6,9,10 which are the labels we
predict. The constraint is compiled directly from logic, as this can be done in
a straightforward manner for an n-item ordering.

The predictive model we use here is a 3 layer MLP with 25 hidden sigmoid units
per layer. It is trained using Adam Optimizer with full data batches
\citep{kingma2015adam}. Early stopping with respect to validation loss is used
as a regularizer.

\section{Probabilistic Soft Logic Encodings} \label{a:psl}
We here give both encodings on the exactly-one constraint over three $x_1,x_2,x_3$. The first encoding is:
\[
(\neg x_1 \land x_2 \land x_3) \lor (x_1 \land \neg x_2 \land x_3) \lor (x_1 \land x_2 \land \neg x_3)
\]
The second encoding is:
\[
  (x_1 \lor x_2 \lor x_3) \land (\neg x_1 \lor \neg x_2) \land (\neg x_1 \lor \neg x_3) \land (\neg x_2 \lor \neg x_3)
\]
Both encodings extend to cases whether the number of variables is arbitrary.

The norm functions used for these experiments are as described in \citet{Kimmig2012ASI}, with the loss for an interpretation $I$ being defined as follows:
\begin{align*}
    x_1 \land x_2 &= max\{0, I(x_1) + I(x_2) - 1\} \\
    x_1 \lor x_2 &= min\{ I(x_1) + I(x_2), 1\} \\
    \neg x_1 &= 1-I(x_1)
\end{align*}
\end{document}